\title{Efficient Algorithms for Recognizing Weighted Tree-Adjoining Languages}
\author{Alexandra Butoi$^1$ \quad
Tim Vieira$^1$ \quad
\textbf{Ryan Cotterell$^1$ \quad
David Chiang$^2$} \\[1ex]
  $^1$ETH Zürich \quad
  $^2$University of Notre Dame \\
    \{\mailto{alexandra.butoi@inf.ethz.ch}{alexandra.butoi}, \mailto{ryan.cotterell@inf.ethz.ch}{ryan.cotterell}\}\texttt{@inf.ethz.ch} \\
    \mailto{dchiang@nd.edu}{dchiang@nd.edu}\quad
    \mailto{tim.f.vieira@gmail.com}{tim.f.vieira@gmail.com}
}
\begin{document}

\maketitle
\begin{abstract}
The class of tree-adjoining languages can be characterized by various two-level formalisms, consisting of a context-free grammar (CFG) or pushdown automaton (PDA) controlling another CFG or PDA. 
These four formalisms are equivalent to tree-adjoining grammars (TAG), linear indexed grammars (LIG), pushdown-adjoining automata (PAA), and embedded pushdown automata (EPDA). We define semiring-weighted versions of the above two-level formalisms, and we design new algorithms for computing their stringsums (the weight of all derivations of a string) and allsums (the weight of all derivations). From these, we also immediately obtain stringsum and allsum algorithms for TAG, LIG, PAA, and EPDA. For LIG, our algorithm is more time-efficient by a factor of $\mathcal{O}(n|\nonterm|)$ (where $n$ is the string length and $|\nonterm|$ is the size of the nonterminal set) and more space-efficient by a factor of $\mathcal{O}(|\stackalphabet|)$ (where $\stackalphabet$ is the size of the stack alphabet) than the algorithm of \citet{vijay-shanker-weir-1989-recognition}. For EPDA, our algorithm is both more space-efficient and time-efficient than the algorithm of \citet{alonso-pardo-2001} by factors of $\mathcal{O}(|\stackalphabet|^2)$ and $\mathcal{O}(|\stackalphabet|^3)$, respectively. Finally, we give the first PAA stringsum and allsum algorithms.

\end{abstract}

\section{Introduction}

\citet{weir-1992-geometric} introduced a hierarchy of formal languages whose first level ($\cfl{}$) is the class of context-free languages and second level ($\tal{}$) is the class of tree-adjoining languages.
Just as context-free languages can be characterized by both context-free grammars and pushdown automata, tree-adjoining languages are characterized by multiple formalisms as well, including tree-adjoining grammars \cite[TAG;][]{JOSHI1975136}, linear indexed grammars \cite[LIG;][]{gazdar1988applicability}, embedded pushdown automata \cite[EPDA;][]{vijay-shanker1987}, and pushdown-adjoining automata \cite[PAA;][]{butoi+al.acl23}. 

Tree-adjoining languages can further be characterized through the mechanism of \emph{control} \citep{weir-1992-geometric}, which yields various \emph{two-level} formalisms.  Specifically, we have shown that CFGs controlled by CFGs, CFGs controlled by PDAs, PDAs controlled by CFGs, and PDAs controlled by PDAs are equivalent to TAG, LIG, PAA, and EPDA, respectively, in a strict sense called \emph{d-strong} equivalence \citep{butoi+al.acl23}.\looseness=-1

\begin{figure}[!t]
    \centering
    \footnotesize
    \begin{subfigure}[t]{\linewidth}
    \centering
        \begin{tikzpicture}
        
        \node[align=center] (cfg) at (0, -1) {controller WCFG};
        \node[align=center] (cnf) at (0, 0) {CNF};
        \node[align=center] (ldcfg) at (0, 2) {WLD-CFG};
        \node[align=center] (ldcfg-nf) at (0, 1) {WLD-CFG CNF};

        \coordinate (c1) at (1, 0.5);
        \path[->,thick] (ldcfg) edge node[left]{\tiny \cref{thm:wcfg_cnf}} (ldcfg-nf);

        \node[align=center] (cfg-cfg-nf) at (3, 0.5) {CFG $\control$ CFG NF};

        \draw[->,dashed] (cfg) -- (cnf);

        \draw[-,thick] (cnf) -- (c1);
        \draw[-,thick] (ldcfg-nf) -- (c1);
        \path[->,thick] (c1) edge node[above]{\tiny \cref{fig:normal-form-cfg-cfg}} (cfg-cfg-nf);

    \end{tikzpicture}
    \caption{Normal form results.}
    \end{subfigure}
    \begin{subfigure}[t]{\linewidth}
    \centering
        \begin{tikzpicture}

        \node[align=center] (stringsum) at (-2, -1) {Stringsum};
        \node[align=center] (allsum) at (2, -1) {Allsum};

        \node[align=center] (cfg-cfg-nf) at (0, 0) {CFG $\control$ CFG NF};
        \node[align=center] (cfg-cfg) at (2, 1) {CFG $\control$ CFG};
        \node[align=center] (tag) at (-2, 1) {spinal TAG};

        \draw[<->,dashed] (cfg-cfg) edge node[above]{\tiny d-strong equivalence} (tag);
        \draw[->,thick] (cfg-cfg) edge node[below right]{\tiny \cref{fig:normal-form-cfg-cfg}} (cfg-cfg-nf);
        \draw[->,thick] (tag) edge node[below left]{\tiny \cref{fig:normal-form-cfg-cfg}} (cfg-cfg-nf);
        \draw[->>,thick] (cfg-cfg-nf) edge node[above left]{\tiny \cref{fig:stringsum-allsum-deductive-system}a} (stringsum);
        \draw[->>,thick] (cfg-cfg-nf) edge node[above right]{\tiny \cref{fig:stringsum-allsum-deductive-system}b} (allsum);
    \end{tikzpicture}
    \caption{Stringsum and allsum results.}
    \end{subfigure}
    \caption{Overview of the results presented in this paper. Solid lines are new results shown in the paper; dashed lines are old results. An arrow $X \rightarrow Y$ means ``$X$ can be converted to $Y$''; an arrow $X\twoheadrightarrow Y$ means ``$X$ has an algorithm for computing $Y$". We only show the conversions and algorithms for CFG $\control$ CFG; the results for the other three two-level formalisms are analogous.}
    \label{fig:overview}
\end{figure}
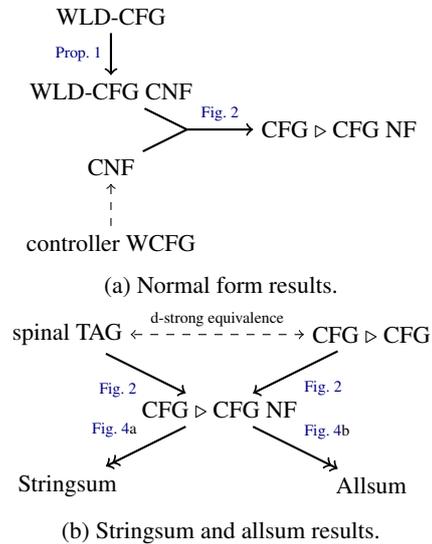

When designing statistical parsers for tree-adjoining formalisms, it is often useful to consider their semiring-weighted generalizations. In this paper, we introduce semiring-weighted versions of the above two-level formalisms, and we give new, more efficient algorithms for computing stringsums (the total weight of all derivations of a string) in these formalisms.\looseness=-1

\Citet{lang-1974-deterministic} gives a recognition algorithm for (what we call) simple PDAs, which can pop and push at most one stack symbol. 
We have shown that this algorithm is suboptimal, 
and that (what we call) top-down PDAs, which always pop exactly one stack symbol, allow computing stringsums more efficiently \citep{butoi+al.emnlp22}.

Existing definitions of LIG \cite{vijay-shanker-weir-1989-recognition, joshi-convergence-1991, makoto2014} are equivalent to CFGs controlled by simple PDAs, and existing recognition algorithms \citep[e.g.,][]{vijay-shanker-weir-1989-recognition} have the same limitation that makes Lang's algorithm suboptimal. By using a top-down PDA as a controller instead, we obtain a new stringsum algorithm that is more space-efficient and runs asymptotically faster. Additionally, we obtain an algorithm for allsums (the total weight of all derivations) that is more space-efficient.\looseness=-1

All stringsum algorithms that operate directly on EPDAs that we are aware of \citep{alonso-etal-2000-redefinition, alonso-pardo-2001} are designed for specific types of EPDAs, failing to take advantage of the structure of computation in top-down PDAs. 
We design a new stringsum algorithm for a subclass of EPDA that is equivalent to a top-down PDA controlled by a top-down PDA, and we obtain both time and space improvements over these previous algorithms. 
Additionally, we design a more space-efficient EPDA allsum algorithm.\looseness=-1

Our algorithms assume that CFGs are given in Chomsky normal form, and for PDAs, we define a new normal form that is exactly analogous to the Chomsky normal form. 
We show that applying these normal forms to controllers and controllees induces, for free, normal forms for the two-level formalisms and for TAG, LIG, PAA, and EPDA. 
The conversions into the normal forms for all these formalisms become 
simpler than direct conversions, as they only require
CFG or PDA conversions. 
We leave extensions of our stringsum and allsum algorithms to general CFGs/PDAs for future work.

The main contributions of this paper are:
\begin{compactitem}
    \item Semiring-weighted versions of the formalisms CFG $\control$ CFG (\cref{sec:two-level}), PDA $\control$ CFG, CFG $\control$ PDA, and PDA $\control$ PDA (\cref{sec:app-two-level-fs}).
    \item Normal forms for these formalisms that arise, for free, from the normal forms of the controller/controllee CFGs/PDAs (\cref{sec:normal-forms,sec:app-normal-forms-derivations}).
    \item Stringsum algorithms for 
    CFG $\control$ CFG (\cref{sec:stringsums}), PDA $\control$ CFG, and PDA $\control$ PDA (\cref{sec:app-stringsums}) that are more time-efficient or space-efficient than the existing algorithms for their equivalent LIG and EPDA.
    \item The first stringsum algorithm for CFG $\control$ PDA, and therefore, for PAA (\cref{sec:app-stringsums}).
    \item Algorithms for computing allsums in 
    these two-level formalisms (\cref{sec:allsums,sec:app-allsums}).
\end{compactitem}

\section{Preliminaries} \label{sec:preliminaries}

Let $\range{i}{j}$ denote the sequence of integers $(i, \ldots, j)$.
If $\str$ is a string, we write $|\str|$ for the length of $\str$, $\str_i$ for the $i^\text{th}$ symbol of $\str$, and $\str\slice{i}{j}$ for the substring $\str_{i+1} \cdots \str_{j}$.

\subsection{Semiring-Weighted Languages}

Throughout this paper, we assume that~$\semiring = \semiringtuple$ is a commutative semiring.
We also sometimes assume that $\semiring$ is $\omega$-continuous, which makes it possible to take (countably) infinite sums. Please see \cref{sec:app-semirings} for definitions of these terms.
Readers not familiar with semirings may safely assume that $\semiringset = \mathbb{R}_{{}\ge 0} \cup \{\infty\}$, $\zero = 0$, $\one = 1$, and $\oplus$ and $\otimes$ are the usual addition and multiplication operations, respectively.

\david{Is it not okay to just write $\semiring$ in place of $\semiringset$? I feel it's not helpful to deal with both $\semiring$ and $\semiringset$ especially when they look so different. I'd be happier with $\mathcal{W}$ and $W$, but happiest with conflating them. \response{alexandra}{I changed A to W for now. happy to conflate W and $\mathcal{W}$ too}}

\begin{defin}
    An \defn{alphabet} $\alphabet$ is a non-empty finite set of \defn{symbols}.
    A \defn{language} over $\alphabet$ is a subset of $\alphabet$'s \defn{Kleene closure} $\kleene{\alphabet} \defeq \bigcup_{n \geq 0} \alphabet^n$.
    A \defn{weighted language} over $\alphabet$ with weights from $\semiring=\semiringtuple$ is a mapping from $\kleene{\alphabet}$ to $\semiringset$.
\end{defin}

This paper considers \emph{weighted} formalisms, where the weights are taken from $\semiring$. Rather than just producing or recognizing strings from a formal language, these formalisms define a weighting function over $\kleene{\alphabet}$. When $\semiring$ is the boolean semiring $(\{0, 1\}, \lor, \land, 0, 1)$, we recover the usual notion of a formal language defined as a set of strings.

\subsection{Weighted Context-Free Grammars}

\begin{defin}
    A \defn{weighted context-free grammar} (WCFG) over a semiring $\semiring = \semiringtuple$ is a tuple $\grammar = \wcfgtuple$, where $\nonterm$, $\alphabet$ are finite sets of nonterminal symbols and terminal symbols, respectively, $\rules \subseteq \nonterm \times \kleene{(\alphabet\cup \nonterm)}$ is a finite set of productions, $\transweight \colon \rules \rightarrow \semiringset$ is a production-weighting function,
    and $\ntvar{S}\in\nonterm$ is the start symbol.

We write a production $(\ntX, \sentalpha)\in \rules$ as $\production{\ntX}{\sentalpha}$, and
if $\transweight \mleft(\production{\ntX}{\sentalpha} \mright)=w$, we write $\wproduction{\ntX}{\sentalpha}{w}$.
\end{defin}

A WCFG produces strings by starting from the symbol $\start$ and repeatedly replacing the leftmost nonterminal $\ntX$ with the right-hand side of a production $\production{\ntX}{\sentalpha}$ until no more nonterminals are left.
In the following definitions, let $\grammar = \wcfgtuple$ be a WCFG.
\begin{defin}
    If $\sentalpha = \sentbeta_1 \ntX \sentbeta_2$ and $\sentalpha' = \sentbeta_1 \sentgamma \sentbeta_2$ are sequences of terminals and nonterminals of\/ $\grammar$, where $\sentbeta_1 \in \kleene\alphabet$ and $\sentbeta_2 \in \kleene{(\alphabet \cup \nonterm)}$, and $\aprod = (\wproduction{\ntX}{\sentgamma}{w})$ is a production of\/ $\grammar$, we write $\derives{\sentalpha}{\sentalpha'}{\aprod}$ to denote that $\sentalpha$ \defn{derives} $\sentalpha'$ in one step using production $\aprod$. 
\end{defin}

\begin{defin} \label{def:cfg_derivation}
    A \defn{partial derivation} in $\grammar$ from $\sentalpha_0$ to $\sentalpha_n$ is a sequence of steps 
    $\derivation = \sentalpha_0 \xRightarrow{\aprod_1} \cdots \xRightarrow{\aprod_n} \sentalpha_n$.
    We write $\derives{\sentalpha_0}{\sentalpha_n}{*}$ to assert that some partial derivation from $\sentalpha_0$ to $\sentalpha_n$ exists, or to denote the set of all such partial derivations.
    If $\sentalpha_0 = \start$ and $\sentalpha_n = \str \in \kleene{\alphabet}$, we call $\derivation$ a \defn{derivation} of $\str$, or that $\grammar$ \defn{derives} $\str$.

The \defn{weight} of $\derivation$ is the product of its production weights, \[\weight(\derivation) \defeq \bigotimes_{i=1}^n \transweight(p_i).\]

We denote by $\derivationset (\grammar, \str)$ the 
set of all derivations in $\grammar$ of $\str$ and by $\derivationset (\grammar)$ the set of all derivations in~$\grammar$.

\end{defin}

\begin{defin}
The \defn{stringsum} $\weight (\grammar, \str)$ of a string $\str$ under $\grammar$ is the total weight of all derivations in $\grammar$ for $\str$, 
\begin{equation*}
\weight (\grammar, \str) \defeq \bigoplus_{\mathclap{\derivation \in \derivationset (\grammar, \str)}} \weight (\derivation).
\end{equation*}
\end{defin}

\begin{defin}
The \defn{allsum} $\weight (\grammar)$ of $\grammar$ is the total weight of all its derivations, 
\begin{equation*}
\weight (\grammar) \defeq \bigoplus_{\mathclap{\derivation \in \derivationset (\grammar)}} \weight (\derivation).
\end{equation*}
\end{defin}

\section{Semiring-Weighted CFG $\control$ CFG} \label{sec:two-level}

\citet{weir-1992-geometric} defined a hierarchy of formal languages and showed that its second level, which is commonly known as the class of tree-adjoining languages, can be obtained through the mechanism of control, using a CFG (the controllee) whose derivations are controlled by another CFG (the controller).
In previous work \citep{butoi+al.acl23}, we extended this mechanism to PDAs, both as controllers and controllees, and obtained four distinct formalisms by mixing a controller CFG or PDA with a controllee CFG or PDA. 
In this paper, we use \emph{semiring-weighted} versions of these formalisms. We give a formal definition of a weighted CFG $\control$ CFG here (see \cref{sec:app-two-level-fs} for the other three definitions).

We denote such a grammar by $\grammar_1 \control \grammar_2$, where $\grammar_1$ is the controller and $\grammar_2$ is the controllee. 
Additionally, we use symbols $\ntX, \ntY, \ntZ, \ldots$ and $\syma, \symb, \symc, \ldots$ for nonterminals and terminals of the controllee, and $\NTA, \NTB, \NTC, \ldots$ and $\Syma, \Symb, \Symc, \ldots$ for nonterminals and terminals of the controller. 

\subsection{Definition} \label{sec:wldcfgs}

A weighted labeled distinguished CFG (WLD-CFG) is simply a WCFG where each production has a label, and its right-hand side has one ``distinguished'' occurrence of a nonterminal symbol. These two extensions allow its derivations to be controlled by another formalism.

\begin{defin} \label{def:wldcfg}
    A \defn{weighted labeled distinguished context-free grammar} (WLD-CFG) over a semiring $\semiring = \semiringtuple$ is a tuple $\grammar = (\nonterm, \alphabet, \labelset, \rules, \transweight, \ntvar{S})$, where 
    \begin{itemize}
    \item $\nonterm$, $\alphabet$, and $\labelset$ are finite sets of nonterminal symbols, terminal symbols, and labels, respectively, 
    \item $\rules \subseteq \labelset \times \mathbb{N} \times \nonterm \times \kleene{(\nonterm \cup \alphabet)}$ is a finite set of productions, 
    \item $\transweight \colon \rules \to \semiringset$ is a production-weighting function, and 
    \item $\ntvar{S} \in \nonterm$ is the start symbol.
    \end{itemize}
    If $(\ell, i, A, \beta_1 \cdots \beta_n)$ is a production in $\rules$, we must have either $i=0$, which we write as \[\ell: A \rightarrow \beta_1 \cdots \beta_n\] or $1 \le i \le n$ and $\beta_i \in \nonterm$, which we write as
    \[\ell: A \rightarrow \beta_1 \cdots \beta_{i-1} \dist{\beta_i} \beta_{i+1} \cdots \beta_n.\]
\end{defin}
\citet{weir-1992-geometric} gave two definitions of a derivation in an LD-CFG (the controllee) controlled by another CFG (the controller). 
In his first definition, the controllee nonterminals keep a record of the productions used, called a \defn{control word} (a string in $\kleene\labelset$). At the end of the derivation, each control word is checked for membership in the language of the controller.
In his second definition, the controllee nonterminals run a controller derivation. When the controller generates a label $\ell \in \labelset$, it causes the controllee to apply production $\ell$.
We use the latter definition, which allows one to think of the controller and the controllee as a single grammar, merging their productions into a single set of rules.

For any sets $\mathcal{X}$ and $\mathcal{Y}$, we define $\mathcal{X}[\mathcal{Y}] = \{ \ntX[\ntvar{Y}] \mid \ntX \in \mathcal{X}, \ntvar{Y} \in \mathcal{Y} \}$, where $\ntX[\ntvar{Y}]$ is just another way of writing the ordered pair of $\ntX$ and~$\ntvar{Y}$.
If $\sentalpha = \ntX_1 \cdots \ntX_k \in \kleene{\mathcal{X}}$ is a sequence of nonterminals, we use the notation $\sentalpha[\ntY] = \ntX_1[\ntY] \cdots \ntX_k [\ntY]$ for any $\ntY \in \mathcal{Y}$.
\timv{for any $\sentbeta \in \mathcal{Y}^*$ \response{david} i don't understand this \response{alexandra}{I changed it, it was $\sentalpha[\beta]=X_1[\beta]\ldots X_k[\beta]$ before}}

To make the following definition more readable, we assume that the controller's right-hand sides are either a single label or a string of nonterminals, and that the controllee right-hand sides are either a single terminal or a string of nonterminals with exactly one distinguished position. It would be possible, but more tedious, to write the definition for the general case.

\begin{defin}
Let $\grammar_1$ be a WCFG with nonterminals $\nonterm_1$ and terminals $\labelset$, called the \defn{controller}, and let $\grammar_2$ be a WLD-CFG with nonterminals $\nonterm_2$ and labels $\labelset$, called the \defn{controllee}. Then $\grammar_1 \control \grammar_2$ is a rewriting system with the following rules: 
\begin{itemize}
\item If $(\wproduction{\NTA}{\sentbeta}{w})$ is a production of $\grammar_1$ where $\sentbeta \in \kleene{\nonterm_1}$, then
$\grammar_1 \control \grammar_2$ has a rule $\wproduction{\ntX[\NTA \rest]}{\ntX[\sentbeta\rest]}{w}$ for each $\ntX \in \nonterm_2$.
\item If $(\wproduction{\NTA}{\ell}{w_1})$ is a production of $\grammar_1$ and $(\labeledproduction{\ell}{\ntX}{\sentalpha_1 \dist{\ntY} \sentalpha_2}{w_2})$ is a production of $\grammar_2$, then $\grammar_1 \control \grammar_2$ has a rule $\wproduction{\ntX[\NTA \rest]}{\sentalpha_1[\Start]\,\ntY[\rest]\,\sentalpha_2[\Start]}{w_1 \otimes w_2}$.\timv{I think this is the first place that the .. notation appears; I think its worth explaining as it will be foreign to folks that don't know LIG or the other Butoi et al paper.}
\end{itemize}
A derivation in $\grammar_1 \control \grammar_2$ starts with $\start[\Start]$.
If there is a rule $\aprod = (\wproduction{\ntX[\NTA \rest]}{\sentalpha_1\ntY[\sentbeta_1\rest]\sentalpha_2}{w})$, then for any $\sentalpha_0 \in \kleene\alphabet$, $\sentbeta_2 \in \kleene{\nonterm_2}$, and $\sentalpha_3 \in \kleene{(\alphabet\cup\nonterm_1[\kleene{\nonterm_2}])}$, we write\timv{``We write'' sounds like its introducing notation; moreover, it suggests that rewriting is deterministic.  I would rephrase it as ``the system may perform the rewrite step'' or something like that }
\[\derives{\sentalpha_0 \purple{\ntX[\NTA \sentbeta_2]}\sentalpha_3}{\sentalpha_0 \purple{\sentalpha_1}\teal{\ntY[\sentbeta_1\sentbeta_2]}\purple{\sentalpha_2}\sentalpha_3}{\aprod}.\]

\noindent Similarly, if there is a rule $\aprod = (\wproduction{\ntX[\NTA]}{\sentalpha}{w})$, then for any $\sentalpha_0 \in \kleene\alphabet$, and $\sentalpha_3 \in \kleene{(\alphabet\cup\nonterm_1[\kleene{\nonterm_2}])}$, we write
\[\derives{\sentalpha_0 \purple{\ntX[\NTA]}\sentalpha_3}{\sentalpha_0 \purple{\sentalpha}\sentalpha_3}{\aprod}.\]

\noindent We write $\xRightarrow{*}$ to denote a derivation with zero or more steps, as in \cref{def:cfg_derivation}.
If $\derives{\start[\Start]}{\str \in \kleene\alphabet}{*}$, we say that $\grammar_1 \control \grammar_2$ \defn{derives} $\str$.
\end{defin}

\begin{example}
    Consider the following $\grammar_1 \control \grammar_2$ that generates the language $\{\sym{a}^n \sym{b}^n \sym{c}^n \sym{d}^n \mid n \in \mathbb{N}\}$.
    The controller is a WCFG $\grammar_1$ with the start symbol $\NT{S}_1$ and the set of productions
    \begin{equation*}
        \rules_1 = \mleft\{
        \begin{array}{l}
            \wproduction{\NT{S}_1}{\NT{T}\NT{L}_3}{\one} \\
            \wproduction{\NT{T}}{\NT{L}_1 \NT{T} \NT{L}_2}{\one} \\
            \wproduction{\NT{T}}{\varepsilon}{\one} \\
            \wproduction{\NT{L}_i}{\ell_i}{\one} \quad (i \in [1:3]) \\
            \wproduction{\NT{S}_1}{\ell_i}{\one} \quad (i \in [4:7])
        \end{array}
        \mright\}
    \end{equation*}
    and the controllee is a WLD-CFG $\grammar_2$ with start symbol $\nt{S}_2$ and the set of productions
    \begin{equation*}
        \rules_2 = \mleft\{
        \begin{array}{ll}
             \wproduction{\ell_1 \colon \nt{S}_2}{\nt{A}\dist{\nt{S}}_2 \nt{D}}{\one} 
             &\wproduction{\ell_4 \colon \nt{A}}{\sym{a}}{\one} \\
             \wproduction{\ell_2 \colon \nt{S}_2}{\nt{B}\dist{\nt{S}}_2 \nt{C}}{\one} 
             &\wproduction{\ell_5 \colon \nt{B}}{\sym{b}}{\one} \\
             \wproduction{\ell_3 \colon \nt{S}_2}{\varepsilon}{\one} 
             &\wproduction{\ell_6 \colon \nt{C}}{\sym{c}}{\one} \\
             &\wproduction{\ell_7 \colon \nt{D}}{\sym{d}}{\one}
        \end{array}
        \mright\}.
    \end{equation*}
    \timv{Mention abcd earlier? \response{david} i don't understand this \response{alexandra}{initially the example said towards the end that it generates the language $a^nb^nc^nd^n$, now it is in the beginning}}
    Below is a derivation of the string $\sym{aabbccdd}$.
    \begin{align*}
        \nt{S}_2 [\NT{S}_1]
        &\Rightarrow \nt{S}_2 [\NT{T} \NT{L}_3] \\
        &\Rightarrow \nt{S}_2 [\NT{L}_1 \NT{T} \NT{L}_2 \NT{L}_3] \\
        &\Rightarrow \nt{A} [\NT{S}_1] \nt{S}_2 [\NT{T} \NT{L}_2 \NT{L}_3] \nt{D} [\NT{S}_1] \\
        &\Rightarrow \sym{a} \nt{S}_2 [\NT{T} \NT{L}_2 \NT{L}_3] \nt{D} [\NT{S}_1] \\
        &\Rightarrow \sym{a} \nt{S}_2 [\NT{L}_1 \NT{T} \NT{L}_2 \NT{L}_2 \NT{L}_3] \nt{D} [\NT{S}_1] \\
        &\xRightarrow{*} \sym{a} \sym{a} \nt{S}_2 [\NT{L}_2 \NT{L}_2 \NT{L}_3] \nt{D} [\NT{S}_1] \nt{D} [\NT{S}_1] \\
        &\Rightarrow \sym{a} \sym{a} \nt{B} [\NT{S}_1]  \nt{S}_2 [ \NT{L}_2 \NT{L}_3] \nt{C} [\NT{S}_1]  \nt{D} [\NT{S}_1] \nt{D} [\NT{S}_1] \\
        &\Rightarrow \sym{a} \sym{a} \sym{b}  \nt{S}_2 [ \NT{L}_2 \NT{L}_3] \nt{C} [\NT{S}_1]  \nt{D} [\NT{S}_1] \nt{D} [\NT{S}_1] \\
        &\xRightarrow{*} \sym{a} \sym{a} \sym{b} \sym{b} \sym{c}  \sym{c}  \sym{d} \sym{d}.
    \end{align*}
\end{example}

Analogous definitions for when the controller is a WPDA and/or the controllee is a WLD-PDA are given in \cref{sec:app-two-level-fs}.

\subsection{Normal Form} \label{sec:normal-forms}

In this section, we define a normal form for CFG $\control$ CFG that will help us design fast stringsum and allsum algorithms. 
Interestingly, this normal form arises naturally from the normal forms of the controllee and the controller.
Analogous normal forms are derived for PDA $\control$ CFG, CFG $\control$ PDA, and PDA $\control$ PDA in \cref{sec:app-two-level-nf}.

\begin{defin}
A WCFG is in \defn{Chomsky normal form} if all of its productions are of one of the following types: (1) $\production{\start}{\varepsilon}$, (2) $\production{\ntX}{\syma}$, or (3) $\production{\ntX}{\ntY \ntZ}$, where $\start, \ntX, \ntY, \ntZ \in \nonterm$ and $\syma \in \alphabet$. Moreover, $\start$ does not appear on the right-hand side of any production.

A WLD-CFG is in Chomsky normal form if all of its productions are of type (1) or (2) above, (3a) $\production{\ntX}{\dist{\ntY} \ntZ}$, or (3b) $\production{\ntX}{\ntY \dist{\ntZ}}$.
\end{defin}

\begin{proposition} \label{thm:wcfg_cnf}
For any WCFG $\grammar_1$ with weights in an $\omega$-continuous semiring, there is a WCFG in Chomsky normal form that defines the same weighted language as $\grammar_1$.

For any WLD-CFG $\grammar_2$ with weights in an $\omega$-continuous semiring, there is a WLD-CFG in Chomsky normal form that is equivalent to $\grammar_2$.
\end{proposition}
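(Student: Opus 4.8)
The plan is to reduce both statements to the classical Chomsky-normal-form construction for (unweighted) CFGs, carried out carefully enough to preserve weights over an $\omega$-continuous semiring. The standard CNF conversion proceeds in four stages: (i) add a fresh start symbol $\start'$ with production $\start' \to \start$ so that $\start$ never appears on a right-hand side; (ii) eliminate $\varepsilon$-productions; (iii) eliminate unit productions $\ntX \to \ntY$; (iv) break up long right-hand sides $\ntX \to \beta_1 \cdots \beta_n$ ($n \ge 2$) by introducing fresh nonterminals, and replace terminals occurring alongside other symbols by fresh nonterminals with productions $\ntX_\syma \to \syma$. I would state that each of these steps has a weighted analogue, citing or re-deriving the weight bookkeeping: step (iv) is purely combinatorial and clearly weight-preserving (the new productions carry weight $\one$ except one that inherits the original weight); step (i) is trivial with $\start' \to \start$ of weight $\one$; the only delicate steps are (ii) and (iii).

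For $\varepsilon$-elimination (step ii), the issue is that a single nonterminal may derive $\varepsilon$ with infinitely many derivations, so I would first compute, for each nonterminal $\ntX$, the total weight $e(\ntX) \defeq \weight(\derives{\ntX}{\varepsilon}{*})$; this is well-defined precisely because the semiring is $\omega$-continuous, and it is the least solution of the obvious system of fixed-point equations. Then for each production $\production{\ntX}{\beta_1\cdots\beta_n}$ and each subset $I$ of the nullable positions, I add a production whose right-hand side deletes exactly the positions in $I$, with weight the original weight $\otimes$-multiplied by $\bigotimes_{i\in I} e(\beta_i)$; finally I delete all $\varepsilon$-productions except possibly $\production{\start'}{\varepsilon}$ (with weight $e(\start)$), which is allowed by the target normal form. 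One checks by a derivation-surgery / bijection argument that the stringsum of every string is preserved. Unit-production elimination (step iii) is analogous: compute $u(\ntX,\ntY)\defeq\weight(\derives{\ntX}{\ntY}{*})$ over chains of unit productions (again a finite linear system whose least solution exists by $\omega$-continuity), then for every non-unit production $\production{\ntY}{\sentalpha}$ with weight $w$ add $\production{\ntX}{\sentalpha}$ with weight $u(\ntX,\ntY)\otimes w$, and discard unit productions.

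For the WLD-CFG case I would observe that everything goes through verbatim once we track the distinguished position. The only new points: (a) $\varepsilon$-elimination must never delete the distinguished occurrence — but productions $\production{\ntX}{\beta_1\cdots\dist{\beta_i}\cdots\beta_n}$ require $\beta_i\in\nonterm$, and if $\beta_i$ is nullable we simply never put $i$ in the deletion set $I$, which is harmless because the controller still must emit a label for the distinguished child; (b) unit productions in an LD-CFG come in two flavors, $\production{\ntX}{\dist{\ntY}}$ and $\production{\ntX}{\ntY}$ (non-distinguished) — the latter are eliminated exactly as above, and for chains through distinguished units the composed weight again comes from the least solution of a linear system; (c) the length-reduction step must leave the distinguished symbol in one of the two children, which is always arrangeable, and the newly introduced binary productions of the forms (3a)/(3b) are exactly the two cases allowed by the normal form. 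So the resulting grammar has only productions of types (1), (2), (3a), (3b), as required, and equivalence (meaning: it induces the same rules in $\grammar_1\control\grammar_2$ up to relabeling, hence the same weighted language after control) follows from the per-step weight-preservation.

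The main obstacle, and the part I would spend the most care on, is justifying that the infinitary sums $e(\ntX)$ and $u(\ntX,\ntY)$ are well-defined and that the rearrangement of derivations underlying ``same stringsum'' is a genuine weight-preserving bijection (or at least a summation-preserving reindexing) — this is exactly where $\omega$-continuity is used, and it is the reason the hypothesis appears in the statement. Everything else (fresh-start-symbol, terminal isolation, binarization) is routine and weight-trivial. I would likely relegate the full bookkeeping to an appendix and in the main text give the four-stage outline plus the fixed-point characterizations of $e$ and $u$, noting that for the LD-CFG the distinguished position is simply carried along untouched.
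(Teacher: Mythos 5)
The first half of your proposal (plain WCFGs) is essentially the paper's construction: the same four stages, the same fixed-point computation of nullary weights $e(\ntX)$ and unary-chain weights $u(\ntX,\ntY)$, with $\omega$-continuity invoked exactly where you invoke it; the paper merely binarizes \emph{before} nullary removal so that the subset construction degenerates to two cases per rule. No issue there.

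The WLD-CFG half has a genuine gap. A WLD-CFG does not define a weighted language on its own; ``equivalent'' here means equivalent in composition with an arbitrary controller, and the paper's proof accordingly transforms the controller alongside the controllee. Your claim that the construction ``goes through verbatim once we track the distinguished position'' breaks at exactly the two delicate steps. For nullary removal: if a distinguished child $\dist{\ntY}$ is nullable, you propose never to put its position in the deletion set, but the productions $\ell\colon \ntY \to \varepsilon$ must still be eliminated to reach the normal form, and the weight lost by eliminating them is not a single scalar $e(\ntY)$ --- the distinguished child inherits the control stack, so the total weight of its $\varepsilon$-derivations depends on which controller nonterminal $\NTA$ sits on top of that stack. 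The paper resolves this with the ``root--foot transform,'' a Bar-Hillel-style product of the controller with controllee nonterminal information, so that the missing weight can be multiplied into a \emph{controller} rule that knows $\NTA$. The same problem recurs for unit productions $\ell\colon \ntX \to \dist{\ntY}$: a unary chain consumes a string of controller labels, so its total weight is not the least solution of a linear system over controllee nonterminals alone but depends on the controller derivation emitting those labels; the paper again partitions controller nonterminals (into unary-producing and not) and pushes the precomputed chain weights into the controller via the same transform. Without some such mechanism, your per-step weight-preservation argument does not go through for the distinguished positions, and the resulting grammar is not weight-equivalent under control.
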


\begin{proof}
See \cref{sec:app-cfg-nf}.
\end{proof}

\begin{figure*}
    \centering
    \[\begin{array}{r@{}l@{\qquad}r@{}l@{\qquad}r@{}l@{\qquad}c}
        \multicolumn{2}{c}{\grammar_1} & \multicolumn{2}{c}{\grammar_2} & \multicolumn{2}{c}{\grammar_1 \control \grammar_2} & \text{name} \\
        \midrule
        \alignedwproduction{\Start}{\ell}{w_1} & \alignedwproduction{\ell\colon\start}{\varepsilon}{w_2} & \alignedwproduction{\start[\Start]}{\varepsilon}{w_1 \otimes w_2} & \text{(\epsrule)} \\
        \alignedwproduction{\NTA}{\ell}{w_1} & \alignedwproduction{\ell\colon\ntX}{\syma}{w_2} & \alignedwproduction{\ntX [\NTA]}{\syma}{w_1 \otimes w_2} & \text{(\terminalrule)} \\
        \alignedwproduction{\NTA}{\ell}{w_1} & \alignedwproduction{\ell\colon\ntX}{\dist{\ntY} \ntZ}{w_2} & \alignedwproduction{\ntX[\NTA\rest]}{\ntY[\rest] \ntZ[\Start]}{w_1 \otimes w_2} & \text{(\popleftrule)} \\
        \alignedwproduction{\NTA}{\ell}{w_1} & \alignedwproduction{\ell\colon\ntX}{\ntY \dist{\ntZ}}{w_2} & \alignedwproduction{\ntX[\NTA\rest]}{\ntY[\Start] \ntZ[\rest]}{w_1 \otimes w_2} & \text{(\poprightrule)} \\
        \alignedwproduction{\NTA}{\NTB \NTC}{w_1} &&& \alignedwproduction{\ntX[\NTA\rest]}{\ntX[\NTB \NTC \rest]}{w_1} & \text{(\pushrule)}
    \end{array}\]
    
    \caption{Normal form of CFG $\control$ CFG. Each row shows a rule from $\grammar_1$ (in Chomsky normal form) and possibly a rule from $\grammar_2$ (in Chomsky normal form) and the resulting normal-form rule of $\grammar_1 \control \grammar_2$ along with its name.}
    \label{fig:normal-form-cfg-cfg}
\end{figure*}

Recall that a CFG $\control$ CFG is defined as a LD-CFG whose derivations are controlled by another CFG. 
When we consider only W(LD-)CFGs in Chomsky normal form, we obtain, for free, a normal form for CFG $\control$ CFG only by mixing their rules, as shown in \cref{fig:normal-form-cfg-cfg}. For the \epsrule{} rule, in principle the left-hand side could be $\start[\NTA]$ for any $\NTA$, but such rules would never be used.

Due to d-strong equivalence,
this normal form also induces a normal form for TAG.   
But, in order to convert a TAG into the normal form, one needs to first extract the controller and the controllee as shown by \citet{butoi+al.acl23}, then convert these to the Chomsky normal form, merge their rules, and convert them back to a TAG, also shown by \citet{butoi+al.acl23} (see \cref{sec:app-complexity-analysis} for a complexity analysis of these transformations).
Although these transformations add some extra complexity, the conversion into the normal form becomes simpler\alexandra{I don't find this true anymore} as it only requires (LD-)CFG conversions, rather than a direct TAG conversion. 
We leave a direct conversion for future work.

\section{Computing Stringsums} \label{sec:stringsums}

In this section, we give a deduction system for computing stringsums of a particular string $\str$ in CFG $\control$ CFG in normal form.
Stringsums of a string $\str$ can be computed analogously in PDA $\control$ CFG, CFG $\control$ PDA, and PDA $\control$ PDA; we show the deduction systems for these in \cref{sec:app-stringsums}.
Due to d-strong equivalence, these deduction systems can also be used for spinal TAG, LIG, EPDA, and spinal PAA by first converting them to their equivalent two-level formalism. 
Our algorithms have improved space requirements and run asymptotically faster than the best algorithms for LIG and EPDA. 
Moreover, it is the only stringsum algorithm for PAA that we are aware of.

\subsection{Pop Computations} \label{sec:pop-comp}

In order to compute stringsums efficiently, we decompose derivations into shareable smaller parts. Our decomposition is based on a concept that has been extensively used in the literature \cite{kuhlmann-etal-2011-dynamic, butoi+al.emnlp22} for decomposing runs of pushdown automata, namely pop computations.
In the context of WPDAs, a pop computation is a partial derivation that has the overall effect of popping a single stack symbol, leaving the rest of the stack unchanged \cite{butoi+al.emnlp22}.
We can define pop computations of CFG $\control$ CFG with a similar interpretation: partial derivations that ``pop'' a single leftmost nonterminal symbol, leaving the rest of the nonterminals unchanged. 
We define pop computations of CFG $\control$ CFG formally below. 
The pop computations of PDA $\control$ CFG, CFG $\control$ PDA, and PDA $\control$ PDA are defined similarly; we present these in \cref{sec:app-pop-comp}.

\begin{defin}
    Let $\grammar_1$ be a controller WCFG and $\grammar_2$ be a WLD-CFG, with nonterminal alphabets $\nonterm_1$ and $\nonterm_2$, respectively.  
    Let $\str \in \kleene{\alphabet}$ be a string of length $|\str|=n$. A \defn{pop computation} of $\grammar_1 \control \grammar_2$ of type $\myitem{i}{X}{A}{l}{j}{Y}{k}$, where $0 \leq i \leq j \leq k \leq l \leq n$, $\ntX,\ntY \in \nonterm_2$, and $\NTA \in \nonterm_1$, is a pair of partial derivations $(d_1, d_2)$, where
    \begin{align*}
    d_1 &\in (\derives{\purple{\ntX[\NTA\stopper]}}{\purple{\str\slice{i}{j}} 
    \teal{\ntY[\varepsilon]}
    \purple{\nestedstack_2}}{*}) \\
    d_2 &\in (\derives{\purple{\nestedstack_2}}{\purple{\str\slice{k}{l}}}{*}).
    \end{align*}
    The weight of the pop computation is $\weight(d_1, d_2) = \weight(d_1) \otimes \weight(d_2)$.
    
    A \defn{pop computation} of type $\myitemc{i}{X}{j}{A}$ is a partial derivation $\derives{\purple{\ntX[\NTA]}}{\purple{\str \slice{i}{j}}}{*}$.
\end{defin}

In words, a pop computation of a CFG $\control$ CFG pops a single controller symbol and derives a string possibly with a \emph{gap}. \cref{fig:pop-computation} shows a visual representation of a pop computation of a CFG $\control$ CFG. 

\begin{figure}[h!]
    \small
    \centering
    \begin{tikzpicture}[scale=0.6]
        \coordinate (c1) at (0,4);
        \coordinate (c2) at (-4,0);
        \coordinate (c3) at (4,0);

        \coordinate (c4) at (0,1.25);
        \coordinate (c5) at (-1.25,0);
        \coordinate (c6) at (1.25,0);

        \filldraw[draw=ETHPurple, fill=ETHPurple!15] (c1) -- (c2) -- (c3) -- (c1) -- cycle;
        \filldraw[draw=ETHPetrol, fill=ETHPetrol!15] (c4) -- (c5) -- (c6) -- (c4) -- cycle;

        \draw[ETHPurple, fill=ETHPurple] (c1) circle (.08);
        \draw[ETHPurple, fill=ETHPurple] (c2) circle (.08);
        \draw[ETHPurple, fill=ETHPurple] (c3) circle (.08);

        \draw[-,decorate,decoration={snake,amplitude=.4mm,segment length=2mm,post length=1mm}, ETHPurple] (c1) -- (c4);
        
        \draw[ETHPetrol, fill=ETHPetrol] (c4) circle (.08);
        \draw[ETHPetrol, fill=ETHPetrol] (c5) circle (.08);
        \draw[ETHPetrol, fill=ETHPetrol] (c6) circle (.08);

        \node[draw=none] () at (0, 4.5) {\color{ETHPurple} $\mathit{X} [\mathsfit{A} \boldsymbol{\alpha}]$};
        \node[draw=none] () at (-4, -0.5) {\color{ETHPurple} $\boldsymbol{s}_i$};
        \node[draw=none] () at (4, -0.5) {\color{ETHPurple} $\boldsymbol{s}_j$};
        \node[draw=none] () at (-2, -0.5) {\color{ETHPetrol} $\boldsymbol{s}_k$};
        \node[draw=none] () at (2, -0.5) {\color{ETHPetrol} $\boldsymbol{s}_l$};

        \node[draw=none] () at (0.75, 1.5) {\color{ETHPetrol} $\mathit{Y} [\boldsymbol{\alpha}]$};
    \end{tikzpicture}
    \caption{A pop computation of a CFG $\control$ CFG is a partial derivation possibly with a gap (the teal part). First, $\purple{\ntX [\NTA \sentalpha]}$ derives $\purple{\str\slice{i}{k}}$ and $\purple{\ntY [\sentalpha]}$, popping one nonterminal $\NTA$. Then $\teal{\ntY [\sentalpha]}$ derives the gap ($\teal{\str \slice{k}{l}}$). Finally, the purple part of the partial derivation resumes, deriving $\purple{\str\slice{l}{j}}$. The weight of the pop computation is the weight of the purple part of the partial derivation.
    }
    \label{fig:pop-computation}
\end{figure}


\subsection{Deduction System} We compute stringsums of CFG $\control$ CFG efficiently by exploiting the decomposable structure of the pop computations. 
We give the stringsum algorithm as a weighted deduction system \citep{goodman-1999-semiring}, shown in \cref{fig:stringsum-allsum-deductive-system}a.
\david{maybe the weights of the inference rules are too implicit? do you have a preferred way of notating the weight of a deduction rule? should we say ``the weight of each deduction rule is the weight of the production in the side condition''? \response{alexandra}{i agree. i added that comment in the paragraph about deduction rules}}

\textbf{Items.} 
The deductive system has two types of items, which correspond to the two types of pop computations (one with a gap and one without a gap). The weights of these items are the total weights of the pop computations of those types.

\begin{figure*}[!ht]
\small 
\[\begin{array}{l@{\qquad}c@{\qquad}c@{\qquad}r@{}l}
\text{name} & \text{(a) stringsum} & \text{(b) allsum} & \multicolumn{2}{c}{\text{side condition}} \\
\midrule
\text{(\epsrule)} &\infer{\strut}{\myitemc{0}{S}{n}{S}} \mathrlap{\quad n = 0} &\infer{}{\allsumitemc{S}{S}} & \alignedwproduction{\start[\Start]}{\varepsilon}{w} \\[8ex]
\text{(\terminalrule)} &\infer{}{\myitemc{i\!-\!1}{X}{i}{A}} \mathrlap{\quad \str_i = a} & \infer{\strut}{\allsumitemc{X}{A}} & \alignedwproduction{\ntX[\NTA]}{\syma}{w}  \\[8ex]
\text{(\popleftrule)} &\infer{\myitemc{j}{Z}{k}{S}}{\myitem{i}{X}{A}{k}{i}{Y}{j}} &\infer{\antstrutc\allsumitemc{Z}{S}}{\allsumitem{X}{A}{Y}} & \alignedwproduction{\ntX[\NTA \rest]}{\ntY[\rest]\,\ntZ[\Start]}{w} \\[12ex]
\text{(\poprightrule)} &\infer{\myitemc{i}{Y}{j}{S}}{\myitem{i}{X}{A}{k}{i}{Z}{j}} &\infer{\antstrutc\allsumitemc{Y}{S}}{\allsumitem{X}{A}{Z}} & \alignedwproduction{\ntX[\NTA \rest]}{\ntY[\Start]\,\ntZ[\rest]}{w} \\[12ex]
\text{(\pushrule-1)} &\infer{\myitem{i}{X}{B}{o}{j}{Y}{m} \quad \myitemtwo{j}{Y}{C}{m}{k}{Z}{l}}{\myitemthree{i}{X}{A}{o}{k}{Z}{l}} &\infer{\antstrut\allsumitem{X}{B}{Y} \quad \allsumitemtwo{Y}{C}{Z}}{\allsumitemthree{X}{A}{Z}} & \alignedwproduction{\ntX[\NTA \rest]}{\ntX[\NTB \NTC \rest]}{w} \\[12ex]
\text{(\pushrule-2)} &\infer{\myitem{i}{X}{B}{l}{j}{Y}{k} \quad {\myitemctwo{j}{Y}{k}{C}}}{\myitemc{i}{X}{l}{A}} &\infer{\antstrut\allsumitem{X}{B}{Y} \quad {\allsumitemctwo{Y}{C}}}{\allsumitemc{X}{A}} & \alignedwproduction{\ntX[\NTA \rest]}{\ntX[\NTB \NTC \rest]}{w} 
\end{array}\]
\caption{Deductive systems for computing stringsums and allsums of CFG $\control$ CFG.}
\label{fig:stringsum-allsum-deductive-system}
\end{figure*}

\textbf{Deduction rules.} We distinguish several types of pop computations based on their first production and include a deduction rule for each. 
The weight of each pop computation can be derived recursively from the weights of shorter pop computations. 
The weight of each deduction rule is the weight of the production in the side condition.
Notice that the deduction system only shows how to derive further items, leaving the order in which these items should be derived by an algorithm unspecified.

\textbf{Goal item.} The goal item is $\myitemc{0}{S}{n}{S}$, which stands for all derivations from $\start [\Start]$ to $\str$, where $|\str|=n$.
Its weight is the total weight of all such derivations, which is exactly the stringsum.

\subsection{Correctness} We prove the correctness of the deduction system for CFG $\control$ CFG using the following theorem. 

\begin{theorem} \label{thm:stringsum}
Let $\grammar_1$ be a controller WCFG with nonterminal alphabet $\nonterm_1$ and start symbol $\Start$, and $\grammar_2$ a WLD-CFG with nonterminal alphabet $\nonterm_2$. 
Let $\str \in \kleene{\alphabet}$ be a string of length $|\str|=n$. 

\begin{enumerate}[label=(\alph*)]
\item The weight $w$ of an item $\myitemc{i}{X}{j}{A}$, where $\ntX \in \nonterm_2$, and $0 \leq i \leq j \leq n$, is the total weight of all pop computations of this type. 

\item The weight of an item $\myitem{i}{X}{A}{l}{j}{Y}{k}$, where $\NTA \in \nonterm_1$, $\ntX, \ntY \in \nonterm_2$, and $0 \leq i \leq j \leq k \leq l \leq n$, is the total weight of all pop computations of this type.
\end{enumerate}
\end{theorem}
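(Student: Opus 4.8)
The plan is to prove both parts simultaneously by induction on the length of the pop computations, together with a matching induction on the length of the derivation of an item in the deduction system; this is the standard ``soundness and completeness'' argument for weighted deduction systems \citep{goodman-1999-semiring}. Soundness means: whenever an item is derivable with weight $w$, then $w$ is a lower bound (in the natural order of the $\omega$-continuous semiring) on the stated total weight of pop computations; completeness means the converse. Since the semiring is $\omega$-continuous and the deduction system is monotone, the fixed-point value assigned to each item is the $\oplus$-sum over all proof trees, so it suffices to set up a weight-preserving bijection between proof trees for an item and pop computations of the corresponding type.

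First I would carefully classify pop computations by their \emph{first} production (in the partial derivation $d_1$, or the single partial derivation in the gapless case). Because $\grammar_1 \control \grammar_2$ is in the normal form of \cref{fig:normal-form-cfg-cfg}, the leftmost nonterminal $\ntX[\NTA\rest]$ must be rewritten by one of the five rule types. For the gapless item $\myitemc{i}{X}{j}{A}$: either the (\terminalrule) rule applies, giving the base case $j = i+1$ and $\str_i = a$; or a (\pushrule) applies, creating $\ntX[\NTB\NTC\rest]$, and here the continuation must eventually eliminate both $\NTB$ and $\NTC$ — the first elimination is itself a gapped pop computation on $\NTB$ whose gap is later filled, and what fills the gap plus the rest is a gapless pop computation on $\NTC$; this is precisely (\pushrule-2). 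For the gapped item $\myitem{i}{X}{A}{l}{j}{Y}{k}$: the first rule is (\popleftrule) or (\poprightrule), in which case the gap is ``exposed immediately'' and the remaining obligation is a gapless pop computation on the newly introduced $\ntZ[\Start]$ or $\ntY[\Start]$ — matching the single antecedent in those deduction rules; or it is (\pushrule), and the computation splits into a gapped pop computation on $\NTB$ followed by a gapped pop computation on $\NTC$ whose outer indices nest appropriately — matching (\pushrule-1). In each case, associativity/commutativity of $\otimes$ and distributivity let the weight of the whole pop computation factor as the product of the side-condition weight and the weights of the sub-pop-computations, which by the inductive hypothesis equal the weights of the antecedent items.

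The main obstacle, I expect, is getting the bookkeeping of the four (or five) indices exactly right in the (\pushrule) cases, i.e.\ verifying that every legal splitting of a pop computation corresponds to exactly one choice of intermediate indices and antecedent items, and conversely. Concretely, for (\pushrule-2) I need to argue that after applying $\ntX[\NTA\rest]\to\ntX[\NTB\NTC\rest]$, the unique first moment at which $\NTB$ is popped determines a split of the yield $\str\slice{i}{j}$ into an outer part handled while $\NTB$ is still on the ``stack'' (yielding a gapped $\NTB$-pop-computation with gap $\str\slice{?}{?}$) and an inner part, with the inner part together with $\NTC$'s elimination forming a gapless $\NTC$-pop-computation; the indices $l$ in $\myitem{i}{X}{B}{l}{j}{Y}{k}$ and the position $k$ where the $\NTC$-computation begins must line up with the gap of the $\NTB$-computation. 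I would make this precise by a short lemma on the ``stack discipline'' of normal-form derivations — essentially that the leftmost nonterminal's $\rest$-tail is a suffix invariant — which justifies treating the elimination of $\NTB$ and of $\NTC$ as independent, composable pieces. Once that lemma is in hand, soundness and completeness both follow by routine induction, and the goal item $\myitemc{0}{S}{n}{S}$ then has weight equal to the stringsum by \cref{def:cfg_derivation} and the definition of stringsum, since its pop computations are exactly the derivations of $\str$ from $\start[\Start]$.
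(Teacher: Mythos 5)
Your proposal is correct and follows essentially the same route as the paper's proof: classify pop computations by their first production, handle (\terminalrule) as the base case, match (\popleftrule)/(\poprightrule)/(\pushrule-1)/(\pushrule-2) to the corresponding decompositions, and use distributivity to factor weights. The paper inducts on the \emph{span} of an item ($j-i$ or $j-i+l-k$) rather than on derivation length, and is somewhat more terse about the index bookkeeping you flag as the main obstacle, but these are presentational differences rather than a different argument.
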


\begin{proof}

Define the \emph{span} of item $\myitemc{i}{X}{j}{A}$ to be $j-i$, and that of item $\myitem{i}{X}{A}{l}{j}{Y}{k}$ to be $j-i+l-k$.
We proceed by induction on the span $\ell$.

\paragraph{Base Case.} 
When $\ell = 1$, the pop computation consists of a single step $\derives{\purple{\ntX [\NTvar{A}]}}{\purple{\symvar{a}}}{\aprod}$. 
The weight of item $\myitemc{i-1}{X}{i}{A}$ is $\transweight(\aprod)$ according to inference rule (\terminalrule).

\paragraph{Inductive Step.}
Assume that the statement holds for pop computations with span at most $(\ell - 1)$ and consider pop computations with span $\ell$. 
The weight of all pop computations of a certain type is the sum of various cases, distinguished by their first production.


\textbf{Starting with \popleftrule{} rules.} 
Any pop computation starting with $\ntX [\NTA \stopper]$ and a \popleftrule{} production~$\aprod$ has the form $(\purple{\ntX [\NTA\stopper]}\xRightarrow{\aprod} 
\teal{\ntY[\varepsilon]}
\purple{\ntZ[\Start]}, \purple{\ntZ[\Start]}\xRightarrow{*} \purple{\str \slice{j}{k}})$.

By the inductive hypothesis, the weight $w_1$ of item $\myitemc{j}{Z}{k}{S}$ is the total weight of all partial derivations such that $\purple{\ntZ[\Start]}\xRightarrow{*}\purple{\str \slice{j}{k}}$. 
By distributivity, the total weight of all pop computations of the above form is $\weight(\aprod) \otimes w_1$. This weight is added to the weight of the item $\myitem{i}{X}{A}{o}{k}{Y}{l}$ in inference rule (\popleftrule).

Since the rules are symmetric, a similar argument can be used for pop computations that start with \poprightrule{} rules.

\textbf{Starting with \pushrule{} rules.} 
Any pop computation starting with $\ntX [\NTA \stopper]$ and a \pushrule{} production $\aprod$ has the form $(\derivation_1, \derivation_2)$, where
\begin{align*}
    \derivation_1 &= \begin{aligned}[t]\purple{\ntX [\NTA \stopper]} \xRightarrow{\aprod} \purple{\ntX [\NTB \NTC \stopper]} &\xRightarrow{*} \purple{\str \slice{i}{j}} \teal{\ntY [\NTC \stopper]} \purple{\nestedstack_2} \\ &\xRightarrow{*} \purple{\str \slice{i}{j}} \teal{\str \slice{j}{k}}
    \orange{\ntZ[\varepsilon]}
    \teal{\nestedstack_1} \purple{\nestedstack_2},\end{aligned} \\
    \derivation_2 &= \teal{\nestedstack_1} \purple{\nestedstack_2} \xRightarrow{*} \teal{\str\slice{l}{m}} \purple{\nestedstack_2} \xRightarrow{*} \teal{\str\slice{l}{m}} \purple{\str\slice{m}{o}}.
\end{align*}
By the inductive hypothesis, the weight $w_1$ of item $\myitem{i}{X}{B}{o}{j}{Y}{m}$ is the weight of all pop computations $(\purple{\ntX [\NTB \stopper]} \xRightarrow{*} \purple{\str \slice{i}{j}} 
\teal{\ntY[\varepsilon]} 
\purple{\nestedstack_2}, \purple{\nestedstack_2} \xRightarrow{*} \purple{\str \slice{m}{o}})$.
Similarly, the weight $w_2$ of item $\myitemtwo{j}{Y}{C}{m}{k}{Z}{l}$ is the total weight of all pop computations $(\teal{\ntY [\NTC \stopper]} \xRightarrow{*} \teal{\str \slice{j}{k}} 
\orange{\ntvar{Z}[\varepsilon]} 
\teal{\nestedstack_1}, \teal{\nestedstack_1} \xRightarrow{*} \teal{\str\slice{l}{m}})$. 
By distributivity, the total weight of all such partial derivations is $\weight(\aprod) \otimes w_1 \otimes w_2$.
This weight is added to the weight of the item $\myitemthree{i}{X}{A}{o}{k}{W}{l}$ in inference rule (\pushrule-1).

Any pop computation starting with $\ntX[\NTA]$ and a \pushrule{} production $\aprod$ has the form
\begin{align*}
\purple{\ntX[\NTA]} \xRightarrow{\aprod} \purple{\ntX[\NTB\NTC]} &\xRightarrow{*} \purple{\str \slice{i}{j}} \teal{\ntY [\NTC]} \purple{\nestedstack_2} \\
&\xRightarrow{*} \purple{\str \slice{i}{j}} \teal{\str\slice{j}{k}} \purple{\nestedstack_2} \\
&\xRightarrow{*} \purple{\str \slice{i}{j}} \teal{\str\slice{j}{k}} \purple{\str\slice{k}{l}}.
\end{align*}
By the inductive hypothesis, the weight $w_1$ of item $\myitem{i}{X}{B}{l}{j}{Y}{k}$ is the weight of all pop computations $(\purple{\ntX [\NTB \stopper]} \xRightarrow{*} \purple{\str \slice{i}{j}}
\teal{\ntY[\varepsilon]} 
\purple{\nestedstack_2}, \purple{\nestedstack_2} \xRightarrow{*} \purple{\str \slice{k}{l}})$, while the weight $w_2$ of item $\myitemctwo{j}{Y}{k}{C}$ is the weight of all pop computations $\teal{\ntY[\NTC]} \xRightarrow{*} \teal{\str\slice{j}{k}}$.
By distributivity, the total weight of all such partial derivations is $\weight(\aprod) \otimes w_1 \otimes w_2$.
This weight is added to the weight of the item $\myitemc{i}{X}{l}{A}$ in inference rule (\pushrule-2).

\end{proof}

\subsection{Complexity Analysis}

Let $\mathcal{X}_1$ and $\mathcal{X}_2$ be the sets of nonterminals or stack symbols of the controllers and of the controllees, respectively.
An algorithm based on one of our deductive systems would would need to store a weight for each item $\myitem{i}{X}{A}{l}{j}{Y}{k}$ and $\myitemc{i}{X}{j}{A}$, for all $i,j,k,l \in \range{0}{n}$, $\NTvar{A} \in \mathcal{X}_1$, and $\ntX, \ntvar{Y} \in \mathcal{X}_2$, giving a space complexity of $\mathcal{O}(n^4 |\mathcal{X}_1| |\mathcal{X}_2|^2)$ and $\mathcal{O}(n^2 |\mathcal{X}_1| |\mathcal{X}_2|)$, respectively. 
Overall we get a space complexity of $\mathcal{O}(n^4 |\mathcal{X}_1|^2 |\mathcal{X}_2|)$.

Computing the weight of each new item requires in the worst case inference rule (\pushrule-1), iterating over indices $i,j,k,l,m,n \in \range{0}{n}$, and symbols $\ntX, \ntvar{Y}, \ntvar{Z} \in \mathcal{X}_2$ and $\NTvar{A},\NTvar{B}, \NTvar{C} \in \mathcal{X}_1$. 
This gives a runtime of $\mathcal{O}(n^6 |\mathcal{X}_1|^3 |\mathcal{X}_2|^3)$. 

\subsection{Comparison with Existing Algorithms}

\david{particularly after the reorganization, I think something needs to be said about TAG here}

\citet{vijay-shanker-weir-1989-recognition} designed a recognition (which is essentially computing stringsums in the boolean semiring) algorithm for a type of LIG that is d-strongly equivalent to a CFG in Chomsky normal form controlled by a simple PDA. 
Our stringsum algorithm is more space-efficient than theirs by a factor of $|\stackalphabet|$ and more time-efficient by a factor of $n |\nonterm|$, where $n$ is the string length, $|\stackalphabet|$ is the size of the stack alphabet, and $|\nonterm|$ is the size of the alphabet of nonterminals.
Their algorithm could be improved using a trick similar to the one used by \citet{butoi+al.emnlp22}, resulting in an algorithm with the same runtime as ours. 
But, in order to do so, it is necessary to store additional items, which increases the space complexity by another factor of $|\stackalphabet|$. 
Additionally, on the type of LIG used by \citeauthor{vijay-shanker-weir-1989-recognition}, we get a further runtime improvement
of a factor of $\mathcal{O}(|\stackalphabet|)$. \alexandra{todos: check again runtimes, check \cite{vijay-shanker-weir-1993-parsing}}

For EPDA, our stringsum algorithm is more space-efficient than the algorithm of \citet{alonso-pardo-2001} by a factor of $|\stackalphabet|^2$ and has an improved runtime by a factor of $|\stackalphabet|^3$, where $|\stackalphabet|$ is the size of the stack alphabet. Their algorithm is designed for an EPDA without finite-state control, which is equivalent to a PDA $\control$ PDA where both the controller and the controllee are single-state. Therefore, we exclude the states when comparing the two algorithms. 

\section{Computing Allsums} \label{sec:allsums}

We reuse the notion of pop computation that we defined in \cref{sec:pop-comp} in order to derive a space-efficient algorithm for computing allsums of CFG $\control$ CFG.
Allsums of PDA $\control$ CFG, CFG $\control$ PDA, and PDA $\control$ PDA can be computed similarly (see \cref{sec:app-allsums}). 
We define a new type of pop computation that will help us compute the weight of \emph{all} derivations instead of derivations of a particular string. 

\begin{defin}
    Let $\grammar_1$ be a controller WCFG and $\grammar_2$ a WLD-CFG, with nonterminal alphabets $\nonterm_1$ and $\nonterm_2$, respectively.
    Let $\stopper$ be a symbol not occurring in the rules of $\grammar_1 \control \grammar_2$.
    A \defn{pop computation} of type $\allsumitem{X}{A}{Y}$, where $\NTA \in \nonterm_1$ and $\ntX, \ntY \in \nonterm_2$, is a pair of partial derivations $(\derivation_1, \derivation_2)$, where
    \begin{align*}
    \derivation_1 &\in (\derives{\purple{\ntX[\NTA \stopper]}}{\purple{\str_1} 
    \teal{\ntY[\varepsilon]} 
    \purple{\nestedstack_2}}{*}) \\
    \derivation_2 &\in (\derives{\purple{\nestedstack_2}}{\purple{\str_2}}{*})
    \end{align*}
    where $\str_1, \str_2 \in \kleene{\alphabet}$ and $\nestedstack_2 \in \kleene{\nonterm_1[\kleene{\nonterm_2}]}$.
    The weight of this pop computation is $\weight(\derivation_1, \derivation_2) = \weight(\derivation_1) \otimes \weight(\derivation_2)$.
    
    A \defn{pop computation} of type $\allsumitemc{X}{A}$ is a partial derivation $\derives{\purple{\ntX[\NTvar{A}]}}{\purple{\str}}{*}$, where $\str \in \kleene{\alphabet}$.
\end{defin}
\Cref{fig:stringsum-allsum-deductive-system}b shows the deduction system for computing allsums. Just as for stringsums, we distinguish two types of items, those of type $\allsumitem{X}{A}{Y}$ and $\allsumitemc{X}{A}$. 

\begin{theorem}
Let $\grammar_1$ be a controller WCFG with nonterminal alphabet $\nonterm_1$ and start symbol $\Start$, and $\grammar_2$ a WLD-CFG with nonterminal alphabet $\nonterm_2$. Assume that both have weights in an $\omega$-continuous semiring (\cref{sec:app-semirings}).
Then the allsum of $\grammar_1 \control \grammar_2$ is 
the value of the item $\allsumitemc{S}{S}$ in the deductive system of \Cref{fig:stringsum-allsum-deductive-system}b.
\end{theorem}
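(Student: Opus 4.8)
The plan is to reduce the allsum statement to the stringsum correctness theorem (\Cref{thm:stringsum}) together with a careful handling of infinite sums in the $\omega$-continuous semiring. The key observation is that the allsum-deduction items of \Cref{fig:stringsum-allsum-deductive-system}b are exactly the ``index-free'' projections of the stringsum items: an allsum item $\allsumitem{X}{A}{Y}$ aggregates the weights of all pop computations of type $\myitem{i}{X}{A}{l}{j}{Y}{k}$ over \emph{all} choices of $i \le j \le k \le l$ and all strings, and similarly $\allsumitemc{X}{A}$ aggregates the no-gap pop computations over all $i \le j$. So first I would prove the analogue of \Cref{thm:stringsum} for the allsum system: the value of item $\allsumitem{X}{A}{Y}$ in the least fixed point of the deductive system of \Cref{fig:stringsum-allsum-deductive-system}b equals $\bigoplus$ over all pop computations of type $\allsumitem{X}{A}{Y}$ of their weights, and likewise for $\allsumitemc{X}{A}$.

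For that lemma, I would argue both inclusions. For soundness ($\sqsubseteq$, i.e.\ the deductive value is a lower bound or, with uniqueness of the least fixed point, exactly right): each finite derivation tree in the deductive system corresponds, by unfolding the inference rules bottom-up exactly as in the proof of \Cref{thm:stringsum}, to an actual pop computation of the stated type with the same weight; conversely (completeness), any pop computation decomposes according to its first production into one of the five shapes (\epsrule, \terminalrule, \popleftrule, \poprightrule, \pushrule), and by structural induction on the pair of partial derivations $(\derivation_1,\derivation_2)$ its weight is accounted for by the corresponding inference rule applied to strictly smaller sub-pop-computations. This is the same case analysis already carried out in the proof of \Cref{thm:stringsum}, only with the string indices erased; the main difference is that the induction is on the size of the derivation rather than on a bounded span, and the sums are genuinely infinite, which is where $\omega$-continuity is used: $\oplus$ and $\otimes$ commute with countable suprema, so the fixed-point value (a countable sup of finite-tree weights) equals the countable sum over all pop computations. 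I would cite \cref{sec:app-semirings} for the relevant continuity facts and the existence/uniqueness of the least fixed point of the monotone system.

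Given the lemma, the theorem is immediate: a complete derivation of $\grammar_1 \control \grammar_2$ starts with $\start[\Start]$ and ends with some $\str \in \kleene\alphabet$. Either it uses the single step $\derives{\start[\Start]}{\varepsilon}{}$ via the (\epsrule) production $\wproduction{\start[\Start]}{\varepsilon}{w}$ — contributing $w$, which is exactly the base case $\infer{}{\allsumitemc{S}{S}}$ of the allsum system — or it is a nontrivial pop computation of type $\allsumitemc{S}{S}$ (no gap, since $\Start$ is the start symbol and, by the normal form, does not reappear, so the ``rest'' is always empty at the top level). By the lemma, the total weight of the latter is the value of $\allsumitemc{S}{S}$ in the deductive system, and the (\epsrule) rule already folds the former into the same item. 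Hence $\weight(\grammar_1 \control \grammar_2) = $ value of $\allsumitemc{S}{S}$, as claimed.

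The main obstacle I anticipate is not the combinatorial case analysis — that is essentially a re-run of the proof of \Cref{thm:stringsum} with indices dropped — but the rigorous bookkeeping of the infinite sums: one must check that every derivation of $\grammar_1 \control \grammar_2$ is counted exactly once (no double counting across the overlapping decompositions in (\pushrule-1) vs.\ (\pushrule-2), which is ensured because a pop computation is classified by \emph{whether} it eventually reaches the gap marker $\stopper$ being popped versus being consumed inside), and that the rearrangement of the resulting countable family of products into the nested sums of the fixed-point equations is justified by $\omega$-continuity. A secondary subtlety is arguing that the relevant derivations are in bijection with pairs $(\derivation_1,\derivation_2)$ of partial derivations as in the definition of pop computation, i.e.\ that every top-level derivation factors through some intermediate sentential form $\str_1\,\ntY[\varepsilon]\,\nestedstack_2$ or is gap-free; this is where the normal form of \cref{fig:normal-form-cfg-cfg} and the structure of the $\rest$-propagation do the work, and I would state it as a short structural claim before invoking the lemma.
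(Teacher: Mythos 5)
Your proposal is correct and follows exactly the route the paper intends: the paper states this theorem without an explicit proof, clearly expecting the reader to rerun the case analysis of \cref{thm:stringsum} with the string indices erased, replace the induction on span by a least-fixed-point argument over the cyclic deductive system, and invoke $\omega$-continuity to justify the resulting countable sums and the distributivity rearrangements. Your identification of the two genuine subtleties --- that the induction measure must change because spans are no longer bounded, and that the supremum of the fixed-point iterates must be shown to equal the infinite sum over all pop computations --- is precisely what a complete write-up would need to add to the existing stringsum proof.
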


Unlike for stringsums, this deduction system has cycles. The value of the goal item $\allsumitemc{S}{S}$ can be found using fixed-point iteration \citep{goodman-1999-semiring} or the semiring generalization of Newton's method \citep{esparza-2007-fixed}. 

Since there are various algorithms for computing item values, we don't analyze time complexity, but only space complexity.
Let $\nonterm_1$ and $\nonterm_2$ be the sets of nonterminals or stack symbols of the controller and the controllee, respectively. 
As the algorithm needs to store an item of the form $\allsumitem{X}{A}{Y}$ for each $\ntX, \ntY\in \nonterm_2$ and $\NTvar{A}\in \nonterm_1$, and an item of the form $\allsumitemc{X}{A}$ for each $\ntX \in \nonterm_2$ and $\NTvar{A}\in \nonterm_1$, it has a space complexity of $\mathcal{O}(|\nonterm_1| |\nonterm_2|^2)$. 
If we were to compute allsums in LIG using an algorithm based on \citeposs{vijay-shanker-weir-1989-recognition} algorithm, we would have stored $\mathcal{O}(|\nonterm_1|^2 |\nonterm_2|^2)$ items, therefore we have a space improvement of a factor of $|\nonterm_1|$. 
Similarly, we get a space improvement for EPDA of a factor of $|\nonterm_1|^2$ over the allsum algorithm based on \citeposs{alonso-pardo-2001} algorithm.

\section{Conclusion} 

Our work has contributed several new results and algorithms for $\tal{}$ formalisms. We introduced semiring-weighted versions of controllable CFGs and PDAs, which give rise naturally to four semiring-weighted two-level formalisms when they are controlled by semiring-weighted CFGs and PDAs. 
We also introduced a WPDA normal form that is completely analogous to the Chomsky normal form for CFGs and showed that one can derive normal forms for the two-level formalisms only from the normal forms of the controller and of the controllee. 
These normal forms are also normal forms for TAG, LIG, PAA, and EPDA, respectively, and the conversions only require conversions of the controller and the controllee. 
Finally, we designed new stringsum and allsum algorithms for all of these formalisms, some of which are faster or more space-efficient than several existing algorithms. 

\section*{Limitations}

Our stringsum algorithms can be used for LIG, EPDA, spinal TAG, and spinal PAA by first converting them to a two-level formalism, converting the resulting controller and controllee grammar/automaton into the normal form, and then merging their rules into a single set of rules. 
Similarly, for a general two-level formalism, the rules of the controller and the controllee would have to be extracted from the merged rules, converted into the normal form, and merged back before using the stringsum or allsum algorithm. 
Although simpler, requiring only CFG and PDA conversions, these transformations add some extra complexity.
We leave direct normal form conversions for future work. 

\section*{Ethics Statement}

The authors foresee no ethical concerns with the research presented in this paper.


\bibliography{emnlp2023-latex/main}
\bibliographystyle{acl_natbib}

\appendix

\onecolumn

\section{Additional Preliminaries}

\subsection{Semirings} \label{sec:app-semirings}

\begin{defin}
A \defn{monoid} is a tuple $(\semiringset, \odot, \id)$, where $\semiringset$ is a set, $\odot$ is an associative binary operation, and $\id \in \semiringset$, called the \defn{identity} element, satisfies $\id \odot a = a \odot \id = a$ for all $a \in \semiringset$. 
If $a \odot b = b \odot a$ for all $a,b$, we say that the monoid is \defn{commutative}. 
\end{defin}

\begin{defin}
A \defn{semiring} is a tuple $\semiring = \semiringtuple$ 
such that $\left(\semiringset, \oplus, \zero \right)$ is a commutative monoid and $\left(\semiringset, \otimes, \one \right)$ is a monoid. 
Additionally, $\otimes$ distributes over $\oplus$, that is, $a \otimes (b \oplus c) = a \otimes b \oplus a \otimes c$ and $(a \oplus b) \otimes c = a \otimes c \oplus b \otimes c$, and $\zero$ is absorbing with respect to $\otimes$, that is, $\zero \otimes a = a \otimes \zero = \zero$. 
If $\otimes$ is commutative, then we say that $\semiring$ is \defn{commutative}.
\end{defin}

The following definitions are from \citet{esparza-2007-fixed}:
\begin{defin}
If $\semiring=\semiringtuple$ is a semiring, the \defn{natural order} on $\semiring$ is the relation $\le$, defined such that $a \le b \Leftrightarrow \exists d \in \semiringset: a\oplus d=b$. If $\le$ is a partial order (i.e., reflexive, antisymmetric, and transitive), then we say that $\semiring$ is \defn{naturally ordered}.
\end{defin}

\begin{defin}
A naturally ordered semiring $\semiring=\semiringtuple$ is \emph{$\omega$-continuous} if it is additionally equipped with an infinite summation operator $\bigoplus$ such that:
\begin{itemize}
\item For any countable sequence $(a_1, a_2, \ldots)$, the sequence $\left(\bigoplus_{i=1}^k a_i\right)_{k \ge \zero}$ has a least upper bound in $\semiringset$ and is equal to $\displaystyle\bigoplus_i a_i$.
\item Multiplication distributes over infinite sums:
\begin{align*}
\bigoplus_i (c \otimes a_i) &= c \otimes \Bigl(\bigoplus_i a_i\Bigr) \\
\bigoplus_i (a_i \otimes c) &= \Bigl(\bigoplus_i a_i\Bigr) \otimes c.
\end{align*}
\item For any partition $(I_1, I_2, \ldots)$ of $\mathbb{N}$,
\begin{equation*}
\bigoplus_j \bigoplus_{i \in I_j} a_i = \bigoplus_i a_i.
\end{equation*}
\end{itemize}
\end{defin}

\subsection{Weighted Pushdown Automata} \label{sec:app-wpda}

We use a definition of WPDA \citep{butoi+al.emnlp22} that is more general than usual and roughly a weighted version of the extended PDAs of \citet[p.~173]{aho+ullman:1972} and the PDAs of \citet[p.~131]{lewis+papadimitriou:1997}.

\begin{defin} \label{def:wpda}
    A \defn{weighted pushdown automaton} (WPDA) over a semiring $\semiring = \semiringset$ is a tuple $\pushdown = \pdatuple$, where 
    \begin{itemize}
    \item $\states$, $\alphabet$, and $\stackalphabet$ are finite sets of states, input symbols, and stack symbols, respectively,
    \item $\trans \subseteq \states \times \kleene{\stackalphabet} \times (\alphabet \cup \{\varepsilon\}) \times \states \times \kleene{\stackalphabet}$ is a finite set of transitions, 
    \item $\transweight \colon \trans \rightarrow \semiringset$ is a transition-weighting function, and 
    \item $\initconfig$ and $\finalconfig$, where $\qinit, \qfinal \in \states, \stackinit, \stackfinal \in \kleene{\stackalphabet}$, are called the initial and final configurations.
    \end{itemize}

If $(p,\stackseq, \syma, q, \stackseq') \in \trans$ is a transition, we write it as $\pdauwedge{p}{\stackseq}{\syma}{q}{\stackseq'}$.
If $\transweight (\pdauwedge{p}{\stackseq}{\syma}{q}{\stackseq'})=w$, we use the notation $\pdaedge{p}{\stackseq}{\syma}{w}{q}{\stackseq'}$.
\end{defin}

We treat strings in $\kleene\stackalphabet$ as stacks, with the first symbol being the top of the stack and the last symbol being the bottom.

A WPDA moves from one configuration to another by following transitions of the form $(\pdauwedge{p}{\stackseq}{\syma}{q}{\stackseq'})$, which represents a move from state $p$ to state $q$, while scanning the symbol $\syma$, popping the sequence $\stackseq$ from the top of the stack and pushing the sequence~$\stackseq'$.
In the following definitions, let $\pushdown = \pdatuple$ be a WPDA.

\begin{defin}
    A \defn{configuration} of $\pushdown$ is a pair $\pdaconfig{q}{\stackseq}$, where $q \in \states$ is the current state and $\stackseq \in \kleene{\stackalphabet}$ is the current contents of the stack. 
    We write $\configs\pushdown$ for the set of all configurations of\/~$\pushdown$.
\end{defin}
\alexandra{this can be made a bit more symmetric to the cfg definitions of sentential form/derivation}
\begin{defin}
    If $\pdaconfig{p}{\stackseq \sentalpha}$ and $\pdaconfig{q}{\stackseq' \sentalpha}$ are configurations of\/ $\pushdown$ and $\atrans = (\pdauwedge{p}{\stackseq}{\syma}{w}{q}{\stackseq'})$ is a transition of\/ $\pushdown$, we write $\derives{\pdaconfig{p}{\stackseq \sentalpha}}{\pdaconfig{q}{\stackseq' \sentalpha}}{\atrans}$ to denote that configuration $\pdaconfig{q}{\stackseq' \sentalpha}$ can be reached from $\pdaconfig{p}{\stackseq \sentalpha}$ in one step using transition $\atrans$. 
\end{defin}

\begin{defin}
We say that a transition $\atrans$ is $k$-pop, $l$-push if $|\stackseq|=k$ and $|\stackseq'|=l$. 
We say that $\atrans$ \defn{scans} $\symvar{a}$, and if $\syma \neq \varepsilon$, we call $\atrans$ \defn{scanning}; otherwise, we call it \defn{non-scanning}. 
\end{defin}
The following type of WPDA is a weighted version of the PDA used by \citet{lang-1974-deterministic} for his recognition algorithm. 

\begin{defin}
    A WPDA is called \defn{simple} 
    if each of its transitions is $k$-pop, $l$-push for some $k \leq 1$ and $l \leq 1$.  
    \alexandra{todo: add initial and final config; Lang uses $\$\$$ but the lig definitions don't}
\end{defin}

The following type of WPDA is a weighted version of the PDA used by \cite{hopcroft-2006-introduction} and others.

\begin{defin}
     A WPDA is called \defn{top-down} if all of its transitions are $1$-pop,
     and the initial and final configurations are $\pdaconfig{\qinit}{\start}$ and $\pdaconfig{\qfinal}{\varepsilon}$, respectively.
\end{defin}

\begin{defin}
A \defn{run} of\/ $\pushdown$ from $\pdaconfig{q_0}{\stackseq_0}$ to $\pdaconfig{q_n}{\stackseq_n}$ is a sequence of steps $\arun = \pdaconfig{q_0}{\stackseq_0} \xRightarrow{\atrans_1} \cdots \xRightarrow{\atrans_n} \pdaconfig{q_n}{\stackseq_n}$.
    We write $\derives{\pdaconfig{q_0}{\stackseq_0}}{\pdaconfig{q_n}{\stackseq_n}}{*}$ to assert that a run from $\pdaconfig{p}{\stackseq}$ to $\pdaconfig{q}{\stackseq'}$ exists, or to denote the set of all such runs.
    If $\pdaconfig{q_0}{\stackseq_0} = \initconfig$ and $\pdaconfig{q_n}{\stackseq_n}=\finalconfig$, we call $\arun$ \defn{accepting}. If $\atrans_i$ scans $\syma_i$, for $i\in \range{1}{n}$, then we say that $\arun$ scans $\str = \syma_1 \cdots \syma_n$.

    The \defn{weight} of $\arun$ is the product of its transition weights,
    \begin{equation*}
        \weight (\arun) \defeq \bigotimes_{i=1}^n \transweight(\atrans_i).
    \end{equation*}

We denote by $\runs (\pushdown, \str)$ the set all accepting runs of $\pushdown$ scanning $\str$, and by $\runs (\pushdown)$ the set of all accepting runs of $\pushdown$.
\end{defin}

\begin{defin}
The \defn{stringsum} $\weight (\pushdown, \str)$ of a string $\str$ under $\pushdown$ is the total weight of all accepting runs scanning $\str$, 
\begin{equation*}
\weight (\pushdown, \str) \defeq \bigoplus_{\mathclap{\arun \in \runs (\pushdown, \str)}} \weight (\arun).
\end{equation*}
\end{defin}

\begin{defin}
The \defn{allsum} $\weight (\pushdown)$ of $\pushdown$ is the total weight of all its accepting runs, 
\begin{equation*}
\weight (\pushdown) \defeq \bigoplus_{\mathclap{\arun \in \runs (\pushdown)}} \weight (\arun).
\end{equation*}
\end{defin}

\section{Two-Level Formal Systems} 

\subsection{Definitions}
\label{sec:app-two-level-fs}

Having defined both WPDA (\cref{def:wpda}) and WLD-CFG (\cref{def:wldcfg}), we can give a definition for weighted $\text{PDA} \control \text{CFG}$.

\begin{defin}
Let $\pushdown_1$ be a controller WPDA with states $\states$, stack alphabet $\stackalphabet_1$ and initial configuration $(\qinit, \stackinit)$, 
and let $\grammar_2$ be a controllee WLD-CFG with nonterminals $\nonterm_2$. Then $\pushdown_1 \control \grammar_2$ is a rewriting system with rules as follows. 
\begin{enumerate}[label=(\alph*)]
\item If $(\pdaedge{q}{\NTA}{\varepsilon}{w}{r}{\stackseq})$ is a transition of $\pushdown_1$, then
$\pushdown_1 \control \grammar_2$ has a rule for each $\ntX \in \nonterm_2$:
\[\wproduction{\ntX[q,\NTA \rest]}{\ntX[r,\stackseq\rest]}{w}.\]
\item If $(\pdaedge{r}{\NTA}{\ell}{w_1}{s}{\varepsilon})$ is a transition of $\pushdown_1$, and $(\labeledproduction{\ell}{\ntX}{\sentalpha_1 \dist{\ntY} \sentalpha_2}{w_2})$ is a production of $\grammar_2$, then $\grammar_1 \control \grammar_2$ has a rule 
\[\wproduction{\ntX[r,\NTA \rest]}{\sentalpha_1[\qinit,\stackinit]\,\ntY[s,\rest]\,\sentalpha_2[\qinit,\stackinit]}{w_1 \otimes w_2}.\]
\end{enumerate}

\end{defin}

Next, we define WLD-PDAs by analogy with WLD-CFGs. This definition is a weighted version of our previous definition \citep{butoi+al.acl23}.

\begin{defin}
    A \defn{weighted labeled distinguished pushdown automaton} (WLD-PDA) over a semiring $\semiring = \semiringtuple$ is a tuple $\pushdown = (\states, \alphabet, \stackalphabet, \labelset, \trans, \transweight, \pdaconfig{\qinit}{\start},\pdaconfig{\qfinal}{\varepsilon})$, where
    \begin{itemize}
    \item $\states$, $\alphabet$, $\stackalphabet$ and $\labelset$ are finite sets of states, input symbols, stack symbols and labels, respectively, 
    \item $\trans \subseteq \labelset \times \mathbb{N} \times \states \times \stackalphabet \times (\alphabet \cup \{ \varepsilon \}) \times \states \times \kleene{\stackalphabet}$ is a finite set of transitions, 
    \item $\transweight \colon \trans \to \semiringset$ is the transition-weighting function, and 
    \item $\pdaconfig{\qinit}{\start}$ and $\pdaconfig{\qfinal}{\varepsilon}$, where $\qinit, \qfinal \in \states, \start \in \stackalphabet$, are the initial and final configurations.
    \end{itemize}    
    If $(\ell, i, q, \NTA, \syma, r, \NTB_1 \cdots \NTB_n)$ is a transition in $\trans$, we must have either $i=0$, which we write as $\pdauwedge{q}{\NTA}{\syma}{r}{\NTB_1 \cdots \NTB_n}$, or $1 \le i \le n$, which we write as $\pdauwedge{q}{\NTA}{\syma}{r}{\NTB_1 \cdots \NTB_{i-1} \dist{\NTB_i} \NTB_{i+1} \cdots \NTB_n}$.
\end{defin}

A WLD-PDA behaves similarly to a WPDA, but its runs are controlled by either a controller CFG or PDA. 
As in \cref{sec:two-level}, the rules of the controller and of the controllee can be merged into a single system.

\begin{defin}
    Let $\grammar_1$ be a controller CFG with nonterminals $\nonterm$ and start symbol $\Start$, and
    let $\pushdown_2$ be a WLD-PDA with stack alphabet $\stackalphabet$ and states $\states$.
    Then $\grammar_1 \control \pushdown_2$ is a rewriting system with rules as follows. 

    \begin{enumerate}[label=(\alph*)]
    \item If $(\wproduction{\NTA}{\sentbeta}{w})$ is a production of $\grammar_1$, where $\sentbeta \in \kleene{\nonterm}$, then $\grammar \control \pushdown_2$ has a transition for each $\ntX \in \stackalphabet$ and $q \in \states$:
    \[\pdaedge{q}{\ntX[\NTA \rest]}{\varepsilon}{w}{q}{\ntX[\sentbeta\rest]}.\]
    \item If $(\wproduction{\NTA}{\ell}{w_1})$ is a production of $\grammar_1$, and $\labeledpdaedge{\ell}{q}{\ntX}{\syma}{w_2}{r}{\sentalpha_1 \dist{\ntY} \sentalpha_2}$ is a transition of $\pushdown_2$, then $\grammar \control \pushdown_2$ has a transition \[\pdaedge{q}{\ntX[\NTA \rest]}{\syma}{w_1 \otimes w_2}{r}{\sentalpha_1[\Start]\,\ntY[\rest]\,\sentalpha_2[\Start]}.\]
    \end{enumerate}
\end{defin}

\begin{defin}
    Let $\pushdown_1$ be a controller PDA with stack alphabet $\stackalphabet_1$ and start configuration $(\qinit, \stackinit)$, and
    let $\pushdown_2$ be a WLD-PDA with stack alphabet $\stackalphabet_2$ and states $\states_2$.
    Then $\pushdown_1 \control \pushdown_2$ is a rewriting system with rules as follows. 

    \begin{enumerate}[label=(\alph*)]
    \item If $(\pdaedge{q}{\NTA}{\varepsilon}{w}{r}{\stackseq})$ is a transition of $\pushdown_1$, then $\pushdown_1 \control \pushdown_2$ has a transition for each $\ntX \in \nonterm_2$ and $p \in \states_2$:
    \[\pdaedge{p}{\ntX[q,\NTA \rest]}{\varepsilon}{w}{p}{\ntX[r,\stackseq\rest]}.\] 
    \item If $(\pdaedge{r}{\NTA}{\ell}{w_1}{s}{\varepsilon})$ is a transition of $\pushdown_1$, and $\labeledpdaedge{\ell}{p}{\ntX}{\syma}{w_2}{q}{\sentalpha_1 \dist{\ntY} \sentalpha_2}$ is a transition of $\pushdown_2$, then $\grammar \control \pushdown_2$ has a transition \[\pdaedge{p}{\ntX[r,\NTA \rest]}{\syma}{w_1\otimes w_2}{q}{\sentalpha_1[\qinit, \stackinit]\ntY[s, \rest]\sentalpha_2[\qinit, \stackinit]}.\]
    \end{enumerate}
\end{defin}

The terms scanning/non-scanning transitions and (accepting) runs are defined analogously to those for WPDAs.

\subsection{Normal Forms} \label{sec:app-normal-forms-derivations}

\subsubsection{WCFG normal form} \label{sec:app-cfg-nf}

\begin{repproposition}{thm:wcfg_cnf}
For any WCFG $\grammar_1$ with weights in an $\omega$-continuous semiring, there is a WCFG in Chomsky normal form that defines the same weighted language as $\grammar_1$.

For any WLD-CFG $\grammar_2$ with weights in an $\omega$-continuous semiring, there is a WLD-CFG in Chomsky normal form that is equivalent 
to $\grammar_2$.
\end{repproposition}

\begin{proof}

The proof for WCFGs is a generalization of the standard conversion for unweighted grammars \citep{hopcroft-2006-introduction}, and is related to the Earley-style algorithm of \citet{stolcke:1995}.

\begin{enumerate}
\item If $S$ is the old start symbol, add a new nonterminal $S'$, a production $S' \xrightarrow{\one} S$, and make $S'$ the new start symbol.
\item For every terminal $a$, create a production $T_a \xrightarrow{\one} a$, where $T_a$ is a fresh nonterminal, and replace every right-hand side occurrence of $a$ with $T_a$.
\item Replace every production $X \xrightarrow{w} Y_1 \cdots Y_k$, where $k > 2$, with the productions
\begin{align*}
X &\xrightarrow{w} Y_1 Z_1 \\
Z_1 &\xrightarrow{\one} Y_2 Z_2 \\
&\vdotswithin{\rightarrow} \\
Z_{k-2} &\xrightarrow{\one} Y_{k-1} Y_k
\end{align*}
where $Z_1, \ldots, Z_{k-2}$ are fresh nonterminals.
\item For each nonterminal $X$, compute the weight of all partial derivations $\derives{X}{\varepsilon}{*}$ using the deductive system
\[\begin{array}{cr@{}l}
\infer{}{\derives{A}{\varepsilon}{*}} & A &{}\xrightarrow{w} \varepsilon \\
\infer{\derives{B}{\varepsilon}{*}}{\derives{A}{\varepsilon}{*}} & A &{}\xrightarrow{w} B \\
\infer{\derives{B}{\varepsilon}{*} \quad \derives{C}{\varepsilon}{*}}{\derives{A}{\varepsilon}{*}} & A &{}\xrightarrow{w} BC 
\end{array}\]
Then for every production $X \xrightarrow{w} YZ$ with $\weight(\derives{Y}{\varepsilon}{*}) = w_Y$ and $\weight(\derives{Z}{\varepsilon}{*}) = w_Z$, add productions $X \xrightarrow{w_1 \otimes w_Y} Z$ and $X \xrightarrow{w_1 \otimes w_Z} Y$. Finally, for all $X$, remove all productions $X \rightarrow \varepsilon$.
\item For all nonterminals $X$ and $Y$, compute the weight of all partial derivations $\derives{X}{Y}{*}$ using the deductive system
\[\begin{array}{cr@{}l}
\infer{}{\derives{A}{A}{*}} \\
\infer{\derives{A}{B}{*}}{\derives{A}{C}{*}} & B &\xrightarrow{w} C \\
\end{array}\]
Then for every production $X \xrightarrow{w_2} YZ$ and nonterminal $W \neq X$, with $\weight(\derives{W}{X}{*}) = w_1$, add production $W \xrightarrow{w_1 \otimes w_2} Y Z$. Finally, for all $X$, $Y$, remove all productions $X \rightarrow Y$.
\end{enumerate}

Now we give a similar construction for WLD-CFGs.
Let $\grammar$ be a WLD-CFG and $\system$ a controller WCFG or WPDA, both with weights from an $\omega$-continuous semiring. 
Then there exists a WLD-CFG in normal form $\grammar'$ and a controller $\system'$ such that $\system' \control \grammar'$ defines the same weighted language as $\system \control \grammar$.
We assume for simplicity that $\system$ is a controller WCFG; a similar construction can be given when it is a WPDA.

\newcommand{\eos}{\bot}
\newcommand{\bhone}[2]{{\NTvar{#1}}^{\ntvar{#2}}_{\eos}}
\newcommand{\bhtwo}[3]{{\NTvar{#1}}^{\ntvar{#2}}_{\ntvar{#3}}}

Converting a $\text{CFG} \control \text{CFG}$ to normal form requires an allsum algorithm that works on any binarized $\text{CFG} \control \text{CFG}$, not necessarily in normal form. 
We must use the rules shown in \cref{fig:allsum-extra-rules} in addition to those from \cref{fig:stringsum-allsum-deductive-system}b when computing allsums.

\begin{figure}[!h]
\[\begin{array}{l@{\qquad}c@{\qquad}r@{}l}
\text{(nullary)} & \infer{\strut}{\allsumitemc{X}{A}} & \alignedwproduction{\ntX[\NTA]}{\varepsilon}{w} \\[8ex]
\text{(controllee unary)} & \infer{\strut}{\allsumitem{X}{A}{Y}} & \alignedwproduction{\ntX[\NTA \rest]}{\ntY[\rest]}{w} \\[8ex]
\text{(controller unary)} &\infer{\antstrut\allsumitem{X}{B}{Y}}{\allsumitem{X}{A}{Y}} & \alignedwproduction{\ntX[\NTA \rest]}{\ntX[\NTB\rest]}{w} 
\end{array}\]
\caption{Additional deduction rules for computing allsums in binarized CFG $\control$ CFG.}
\label{fig:allsum-extra-rules}
\end{figure}

\paragraph{Root--foot transform} The nullary and unary removal steps below are analogous to nullary and unary removal in a WCFG. However, sometimes, when we remove nullary/unary rules, we need information from both the controller and controllee to compute their lost weight. To facilitate this, we define the following transformation, which copies information from the controllee to the controller.

\david{I'm not at all attached to the name ``root--foot transform''}
\david{To do: these nonterminals can be drawn to look exactly like allsum items.}

Let $\nonterm_1$ and $\nonterm_2$ be the controller and controllee nonterminal alphabet, respectively.

\begin{enumerate}
\item For every controllee rule $\ell : \ntvar{X} \rightarrow \sentalpha$ where $\sentalpha$ has no distinguished symbol, relabel the rule as $\bhone{\ell}{X}$.
\item For every controllee rule $\ell : \ntvar{X} \rightarrow \sentalpha \dist{\ntvar{Y}} \sentbeta$, relabel the rule as $\bhtwo{\ell}{X}{Y}$.
\item Perform the construction of \citet{barhillel-etal-1961} on the controller, using controllee nonterminals instead of states. That is, relabel every controller nonterminal $\NTvar{A}$ as $\bhone{A}{X}$ or $\bhtwo{A}{X}{Y}$ for all $\ntvar{X}, \ntvar{Y} \in \nonterm_2$, making as many copies of rules as needed, subject to the constraints that (i) if the lhs is $\bhtwo{A}{X}{Y}$ (where $Y$ could be $\eos$) then the first rhs symbol must be $\bhtwo{A}{X}{Y'}$ and the last rhs symbol must be $\bhtwo{A}{X'}{Y}$; (ii) if $\bhtwo{A}{X}{Y}$ is an rhs nonterminal then its successor (if any) must be $\bhtwo{B}{Y}{Z}$.
\item Add a new controller start symbol $\NTvar{S'}$ with rules $\NTvar{S'} \rightarrow \bhone{S}{X}$ for all $\ntvar{X} \in \nonterm_2$.
\end{enumerate}

\paragraph{Controllee binarization} In the controllee, replace every production $\ell:\ntvar{X} \xrightarrow{w} \ntvar{Y_1} \cdots \dist{\ntvar{Y_d}} \cdots \ntvar{Y_k}$, where $k > 2$, with the productions
\begin{align*}
\ell_1: \ntvar{X} &\xrightarrow{w} \ntvar{Y_1} \dist{\ntvar{Z_1}} \\
\ell_2: \ntvar{Z_1} &\xrightarrow{\one} \ntvar{Y_2} \dist{\ntvar{Z_2}} \\
&\vdotswithin{\rightarrow} \\
\ell_{d-1}: \ntvar{Z_{d-2}} &\xrightarrow{\one} \ntvar{Y_{d-1}} \dist{\ntvar{Z_d}} \\
\ell_{d}: \ntvar{Z_{d}} &\xrightarrow{\one} \dist{\ntvar{Z_{d+1}}} \ntvar{Y_k} \\
\ell_{d+1}: \ntvar{Z_{d+1}} &\xrightarrow{\one} \dist{\ntvar{Z_{d+2}}} \ntvar{Y_{k-1}} \\
&\vdotswithin{\rightarrow} \\
\ell_{k-1}: \ntvar{Z_{k-1}} &\xrightarrow{\one} \dist{\ntvar{Y_d}} \ntvar{Y_{d+1}}
\end{align*}
where $\ntvar{Z_1}, \ldots, \ntvar{Z_{k-2}}$ are fresh controllee nonterminals.

Accordingly, in the controller, replace every rhs occurrence of $\ell$ with $\ell_1 \cdots \ell_{k-1}$.

\paragraph{Controllee nullary removal} Like the standard nullary removal algorithm \citep{hopcroft-2006-introduction}, nullary removal in an WLD-CFG has three steps: partitioning, precomputation, and removal. However, removing nullary rules from the controllee also sometimes requires modifications to the controller.
\begin{enumerate}
\item Partition: Replace every controllee nonterminal $\ntvar{X}$ with two nonterminals $\ntvar{X}_\nul$ and $\ntvar{X}_\nonul$. 
Every rule with $k$ rhs nonterminals becomes $2^k$ rules with all possible combination of rhs nonterminals, and the lhs annotated $\nul$ iff the rhs is empty or all the rhs symbols are annotated $\nul$. The start symbol is $\ntvar{S}_\nonul$.
\item Precompute: Compute the allsum $\weight\left(\allsumitemc{X_\nul}{A}\right)$ for all $\NTvar{A} \in \nonterm_1, \ntvar{X_\nul} \in \nonterm_2$.
\item Remove (non-distinguished): For every non-distinguished occurrence of $\ntvar{X_\nul}$ in the rhs of a controllee rule $\pi$, delete the occurrence and multiply the weight of $\pi$ by $\weight\left(\allsumitemc{X_\nul}{S}\right)$.
\item Remove (distinguished): The distinguished occurrences of $\ntvar{X_\nul}$ cannot be removed in the same way, because we don't know what weight to multiply in. Instead, perform the root--foot transform. 
Then, for every occurrence of $\bhone{A}{X_\nul}$ on the rhs of a controller rule $\pi$, delete the occurrence and multiply the weight of $\pi$ by $\weight\left(\allsumitemc{X_\nul}{A}\right)$.
\item Remove (start): Add controller rule $\NTvar{S}' \xrightarrow{\one} \ell$ and controllee rule $\ell : \ntvar{S^\nonul} \xrightarrow{w} \varepsilon$, where $w = \weight\left(\allsumitemc{S_\nul}{S}\right)$.
\item Remove every controller rule $\Start'\rightarrow \varepsilon$.
\item Remove every controller rule with lhs $\bhone{A}{X_\nul}$ and every controllee rule with lhs $\ntvar{X_\nul}$.
\end{enumerate}

\paragraph{Controllee unary removal}

\newcommand{\unary}{{\mathsf{U}}}
\newcommand{\nonunary}{{\mathrlap{\not}\mathsf{U}}}
\begin{enumerate}
\item Partition: Replace every controller nonterminal $\NTvar{A}$ with two nonterminals $\NTvar{A}_\unary$ and $\NTvar{A}_\nonunary$. Every rule with $k$ rhs nonterminals becomes $2^k$ rules. A rule's lhs is annotated $\unary$ iff all of its rhs symbols are either labels of controllee unary rules or annotated $\unary$. The controller temporarily has two start symbols, $\NTvar{S}_\unary$ and $\NTvar{S}_\nonunary$. 
\item Precompute: Compute the allsum $\weight\left(\allsumitem{X}{A_\unary}{Y}\right)$ for all $A_\unary \in \nonterm_1, X, Y \in \nonterm_2$.
\item Remove (controller): Perform the root--foot transform. 
Then, for every occurrence of $\bhtwo{A_\unary}{X}{Y}$ on the rhs of rule $\pi$, delete the occurrence and multiply the weight of $\pi$ by $\weight\left(\allsumitem{A_\unary}{X}{Y}\right)$.
Remove all rules with lhs $\bhtwo{X}{A_\unary}{Y}$.
\item Remove (controllee): Remove all unary rules, and for each rule $\ell: \ntvar{Y} \xrightarrow{w} \sentalpha$, and all $X \in \nonterm_2$, make a copy $\ell: \ntvar{X} \xrightarrow{w} \sentalpha$. \david{The correctness of this step depends on the fact that if two controllee rules have the same label, then they have the same lhs, which is guaranteed by the root--foot transform. After removing unary rules, this is no longer true. A controllee rule has an actual lhs $\ntvar{X}$ and an ``implied'' lhs $\ntvar{Y}$ based on its label. The difference between the two is what ensures that the controller multiplies in the correct weight of unary paths from $\ntvar{X}$ to $\ntvar{Y}$.}
\end{enumerate}

\end{proof}

\subsubsection{WPDA normal form} \label{sec:app-pda-nf}
We first define a WPDA normal form that is analogous to Chomsky normal form for WCFGs.



\begin{defin} \label{def:pda-nf}
    A WPDA is in \defn{normal form} if all of its transitions are of one of the following types: (1) $\pdaedge{q_\init}{\start}{\varepsilon}{w}{q_\final}{\varepsilon}$, (2) scanning and $0$-push, or (3) non-scanning and $2$-push. Moreover, $\start$ is not pushed by any transition.

    A WLD-PDA is in normal form if all of its transitions are of type (1) or (2) above, (3a) $\pdaedge{q}{\ntX}{\varepsilon}{w}{r}{\dist{\ntY}\ntZ}$, or (3b) $\pdaedge{q}{\ntX}{\varepsilon}{w}{r}{\ntY\dist{\ntZ}}$.
\end{defin}

\begin{proposition} \label{prop:pda-nf}
Let $\pushdown$ be a WPDA with weights from an $\omega$-continuous semiring. Then there is a WPDA in normal form that defines the same weighted language as $\pushdown$.

For any WLD-PDA $\pushdown_2$ with weights in an $\omega$-continuous semiring, there is a WLD-PDA in Chomsky normal form that is equivalent to $\pushdown_2$.
\end{proposition}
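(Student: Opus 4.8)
The plan is to prove this the same way Chomsky normal form is obtained for WCFGs (Proposition~\ref{thm:wcfg_cnf}), transporting each step through the standard CFG$\leftrightarrow$PDA correspondence. The two statements---for plain WPDAs and for WLD-PDAs---are handled in parallel, since the ``distinguished'' position in a WLD-PDA transition plays exactly the role of the distinguished nonterminal in a WLD-CFG production, and every construction below preserves it.

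First I would reduce to a more convenient starting point. Given $\pushdown = \pdatuple$, convert it to a \emph{top-down} WPDA (one in which every transition is $1$-pop and the initial/final configurations are $\pdaconfig{\qinit}{\start}$ and $\pdaconfig{\qfinal}{\varepsilon}$): this is the weighted analogue of the textbook conversion of an extended PDA to one that pops exactly one symbol per move, achieved by pushing a fresh bottom-of-stack marker and, where needed, introducing intermediate states and stack symbols so that a $k$-pop move is simulated by $k$ successive $1$-pop moves that ``remember'' the popped prefix in the state; weights are carried along on the first of the simulated moves and $\one$ on the rest, and $\omega$-continuity guarantees the resulting infinite sums (over runs) still behave correctly. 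The next step is binarization of the push side: a transition that pushes $\NTB_1\cdots\NTB_m$ with $m>2$ is split into a chain of non-scanning $1$-pop, $2$-push transitions through fresh stack symbols $Z_1,\dots,Z_{m-2}$, exactly mirroring step~3 of the WCFG construction; a transition that pushes nothing or one symbol but also scans is left for the scanning/$0$-push bucket, and a non-scanning $1$-push transition is an ``$\varepsilon$-like'' and $2$-push-padded move handled together with the unary-removal step. Separating scanning from stack action (so that type~(2) transitions are scanning \emph{and} $0$-push) is done by routing any transition that both scans and changes the stack through an intermediate state: first scan with no stack change, then do the stack change non-scanningly.

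The heart of the argument is the removal of the ``degenerate'' transitions---non-scanning transitions that are $1$-pop, $0$-push ($\pdauwedge{q}{\ntX}{\varepsilon}{r}{\varepsilon}$, the PDA analogue of $\varepsilon$-productions) and non-scanning transitions that are $1$-pop, $1$-push ($\pdauwedge{q}{\ntX}{\varepsilon}{r}{\ntY}$, the analogue of unary productions). For these I would precompute, in the $\omega$-continuous semiring, the total weight of every ``pop-and-go'' run: the weight $W_{q,\ntX,r}$ of all runs from $\pdaconfig{q}{\ntX}$ to $\pdaconfig{r}{\varepsilon}$ that never go below the initial stack height, and the weight $V_{q,\ntX,r,\ntY}$ of all runs from $\pdaconfig{q}{\ntX}$ to $\pdaconfig{r}{\ntY}$ that replace the top symbol without touching what is below---each is the least solution of a linear system exactly analogous to the deductive systems in the proof of Proposition~\ref{thm:wcfg_cnf} for $\derives{X}{\varepsilon}{*}$ and $\derives{X}{Y}{*}$, and least solutions exist by $\omega$-continuity. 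Then, as in steps~4--5 there, I fold these weights into the surviving transitions (compose the degenerate prefix/suffix into each neighboring $2$-push or scanning transition), delete the degenerate transitions, and finally replace the initial configuration $\pdaconfig{\qinit}{\start}$ by a new dedicated pair with a single transition $\pdaedge{q_\init}{\start}{\varepsilon}{w}{q_\final}{\varepsilon}$ absorbing the weight of the empty-string runs, and ensure $\start$ is never pushed by renaming any pushed copy. For the WLD-PDA case every step above is repeated verbatim on transitions with a distinguished position, noting that the degenerate-transition removal must, just as in the WLD-CFG proof, push lost weight onto the \emph{controller} side when the distinguished symbol is involved (this is where the root--foot-style bookkeeping of Section~\ref{sec:app-cfg-nf} reappears); a WLD-PDA transition pushing a long string with the distinguished mark at position $i$ is binarized into a chain that threads the mark down to a $2$-push transition of type (3a) or (3b).

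At each stage I would argue \emph{weight-preservation} by exhibiting a weight-preserving bijection (or, where states are split, a weight-preserving correspondence) between accepting runs of the old and the new automaton---precisely the content already used implicitly in Proposition~\ref{thm:wcfg_cnf}---so that the stringsum of every $\str$ is unchanged, hence the weighted languages coincide. The main obstacle I anticipate is the degenerate-transition elimination in the \emph{presence of the distinguished position}: removing a $1$-pop/$0$-push or $1$-pop/$1$-push transition whose popped symbol or pushed symbol is marked cannot simply multiply in a scalar, because the weight of the eliminated sub-run depends on material that only becomes available later in the controller's derivation; handling this cleanly requires the copy-from-controllee-to-controller device (the root--foot transform analogue), and getting its interaction with binarization and with the partition/precompute/remove phasing right is the delicate part of the proof. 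All the rest is routine semiring bookkeeping, with $\omega$-continuity invoked exactly to justify the countable sums defining $W$ and $V$.
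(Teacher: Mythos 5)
Your proposal is essentially the paper's construction for the plain WPDA half: the paper also reduces to a $1$-pop (top-down) automaton, separates scanning from stack manipulation (it does this by pushing a fresh symbol $T_a$ and then popping it with a scanning $0$-push move, rather than your scan-then-restructure ordering, but the effect is the same), binarizes pushes, and then removes nullary and unary transitions by precomputing exactly the least-fixed-point quantities you call $W_{q,X,r}$ and $V_{q,X,r,Y}$ --- except that the paper outsources those last two steps to its earlier work (Butoi et al., 2022, \S3.2--3.3) instead of spelling them out. Where you genuinely diverge is the WLD-PDA half: the paper's stated proof is a one-line round trip --- convert the WLD-PDA to an equivalent WLD-CFG, normalize it via Proposition~\ref{thm:wcfg_cnf}, and convert back --- which sidesteps entirely the interaction between the distinguished position, the root--foot transform, and the nullary/unary phasing that you correctly identify as the delicate point. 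Your direct construction is what the paper only sketches in the remark following its proof (noting that distinguished-symbol weights must be folded into the controller), so it is a legitimate alternative, but it buys you extra bookkeeping for no gain in the statement being proved; the round trip buys simplicity at the cost of a potentially larger automaton. Neither route is carried out in full detail by the paper, and your proposal's level of rigor is comparable, so I see no gap --- only a missed shortcut on the second half.
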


\begin{proof}
In previous work \citep{butoi+al.emnlp22}, we gave a conversion from arbitrary WPDAs to a normal form that is close to \cref{def:pda-nf}, but allows transitions that are scanning but are 1-push or 2-push. We can modify the conversion using the following preparatory steps:

\begin{enumerate}
\item If $(\qinit, \stackinit)$ is the old start configuration, add a new state $q_\init'$, a new stack symbol $S'$, and a transition $q_\init', S' \xrightarrow{\one} \qinit, \stackinit$, and make $(q_\init', S')$ the new start configuration.
\item For every terminal $a$, add a fresh stack symbol $T_a$. Replace every scanning transition $(q, X \xrightarrow{a/w} r, \gamma)$ with $(q, X \xrightarrow{\epsilon/w} r, T_a \gamma)$, and add transitions $(q, T_a \xrightarrow{a/\one} q, \varepsilon)$ for every state $q$.
\item Remove all nullary transitions \citep[\S3.2]{butoi+al.emnlp22}.
\item Remove all unary transitions \citep[\S3.3]{butoi+al.emnlp22}.
\end{enumerate}

A WLD-PDA can always be converted into an equivalent WLD-CFG \cite{butoi+al.acl23}, converted into the normal form, then converted back into into a WLD-PDA. 
\end{proof}

To convert a WLD-PDA to normal form, we use the construction above, modified by analogy with \cref{{thm:wcfg_cnf}}. 
In the root--foot transform, each nonterminal includes two or four states (just as allsum items do).
The nullary removal procedure for WPDAs is more complex than for WCFGs, but the required modification is analogous: we remove non-distinguished stack symbols $\ntvar{X_\nul}$ as usual, but when removing a distinguished stack symbol $\ntvar{X_\nul}$, we must multiply its weight in the controller rather than the controllee. 

\begin{figure*}
    \centering
    \begin{subfigure}{\linewidth}
    \[\begin{array}{r@{}lr@{}lr@{}lc}
        \multicolumn{2}{c}{\pushdown_1} & \multicolumn{2}{c}{\grammar_2} & \multicolumn{2}{c}{\pushdown_1 \control \grammar_2} & \text{name} \\
        \midrule
        \alignedpdaedge{\qinit}{\Start}{\ell}{w_1}{\qfinal}{\varepsilon} & \alignedwproduction{\ell\colon\start}{\varepsilon}{w_2} & \alignedwproduction{\start[\qinit,\Start]}{\varepsilon}{w_1 \otimes w_2} & \text{(\epsrule)} \\
        \alignedpdaedge{p}{\NTA}{\ell}{w_1}{\qfinal}{\varepsilon} & \alignedwproduction{\ell\colon\ntX}{\syma}{w_2} &   \alignedwproduction{\ntX [p,\NTA]}{\syma}{w_1 \otimes w_2} & \text{(\terminalrule)} \\
        \alignedpdaedge{p}{\NTA}{\ell}{w_1}{q}{\varepsilon} & \alignedwproduction{\ell\colon\ntX}{\dist{\ntY} \ntZ}{w_2} &  \alignedwproduction{\ntX[p,\NTA\rest]}{\ntY[q,\rest] \ntZ[\qinit,\Start]}{w_1 \otimes w_2} & \text{(\popleftrule)} \\
        \alignedpdaedge{p}{\NTA}{\ell}{w_1}{q}{\varepsilon} & \alignedwproduction{\ell\colon\ntX}{\ntY \dist{\ntZ}}{w_2} &  \alignedwproduction{\ntX[p,\NTA\rest]}{\ntY[\qinit,\Start] \ntZ[q,\rest]}{w_1 \otimes w_2} & \text{(\poprightrule)} \\
        \alignedpdaedge{p}{\NTA}{\varepsilon}{w_1}{q}{\NTB \NTC} &&& \alignedwproduction{\ntX[q,\NTA\rest]}{\ntX[q,\NTB \NTC \rest]}{w_1} & \text{(\pushrule)}
    \end{array}\]
    \caption{PDA $\control$ CFG.}
    \label{fig:pda-cfg-nf}
    \end{subfigure}
    \\[4ex]
    \begin{subfigure}{\linewidth}
    \[\begin{array}{r@{}lr@{}lr@{}lc}
        \multicolumn{2}{c}{\grammar_1} & \multicolumn{2}{c}{\pushdown_2} & \multicolumn{2}{c}{\grammar_1 \control \pushdown_2} & \text{name} \\
        \midrule
        \alignedwproduction{\Start}{\ell}{w_1} & \alignedpdaedge{\ell\colon \qinit}{\start}{\varepsilon}{w_2}{\qfinal}{\varepsilon} & \alignedpdaedge{\qinit}{\start[\Start]}{\varepsilon}{w_1 \otimes w_2}{\qfinal}{\varepsilon} & \text{(\epsrule)} \\
        \alignedwproduction{\NTA}{\ell}{w_1} & \alignedpdaedge{\ell\colon q}{\ntX}{\syma}{w_2}{r}{\varepsilon} & \alignedpdaedge{q}{\ntX[\NTA]}{\syma}{w_1 \otimes w_2}{r}{\varepsilon} & \text{(\terminalrule)} \\
        \alignedwproduction{\NTA}{\ell}{w_1} & \alignedpdaedge{\ell\colon q}{\ntX}{\varepsilon}{w_2}{r}{\dist{\ntY} \ntZ} & \alignedpdaedge{q}{\ntX[\NTA \rest]}{\varepsilon}{w_1 \otimes w_2}{r}{\ntY[\rest] \ntZ [\Start]} & \text{(\popleftrule)} \\
        \alignedwproduction{\NTA}{\ell}{w_1} & \alignedpdaedge{\ell\colon q}{\ntX}{\varepsilon}{w_2}{r}{\ntY \dist{\ntZ}} & \alignedpdaedge{q}{\ntX [\NTA \rest]}{\varepsilon}{w_1 \otimes w_2}{r}{\ntY [\Start] \ntZ [\rest]} & \text{(\poprightrule)} \\
        \alignedwproduction{\NTA}{\NTB \NTC}{w_1} &&& \alignedpdaedge{q}{\ntX [\NTA \rest]}{\varepsilon}{w_1}{q}{\ntX [\NTB \NTC \rest]} & \text{(\pushrule)}
    \end{array}\]
    \caption{CFG $\control$ PDA.}
    \label{fig:cfg-pda-nf}
    \end{subfigure}
    \\[4ex]
    \begin{subfigure}{\linewidth}
    \[\begin{array}{r@{}lr@{}lr@{}lc}
        \multicolumn{2}{c}{\pushdown_1} & \multicolumn{2}{c}{\pushdown_2} & \multicolumn{2}{c}{\pushdown_1 \control \pushdown_2} & \text{name} \\
        \midrule
        \alignedpdaedge{\qinit^1}{\Start}{\ell}{w_1}{\qfinal^1}{\varepsilon} & \alignedpdaedge{\ell\colon \qinit^2}{\start}{\varepsilon}{w_2}{\qfinal^2}{\varepsilon} & \alignedpdaedge{\qinit^2}{\start[\qinit^1,\Start]}{\varepsilon}{w_1 \otimes w_2}{\qfinal^2}{\varepsilon} & \text{(\epsrule)} \\
        \alignedpdaedge{p}{\NTA}{\ell}{w_1}{\qfinal^1}{\varepsilon} & \alignedpdaedge{\ell\colon r}{\ntX}{\syma}{w_2}{s}{\varepsilon} & \alignedpdaedge{r}{\ntX[p,\NTA]}{\syma}{w_1 \otimes w_2}{s}{\varepsilon} & \text{(\terminalrule)} \\
        \alignedpdaedge{p}{\NTA}{\ell}{w_1}{q}{\varepsilon} & \alignedpdaedge{\ell\colon r}{\ntX}{\varepsilon}{w_2}{s}{\dist{\ntY} \ntZ} & \alignedpdaedge{r}{\ntX[p,\NTA \rest]}{\varepsilon}{w_1 \otimes w_2}{s}{\ntY[q,\rest] \ntZ [\qinit^1,\Start]} & \text{(\popleftrule)} \\
        \alignedpdaedge{p}{\NTA}{\ell}{w_1}{q}{\varepsilon} & \alignedpdaedge{\ell\colon r}{\ntX}{\varepsilon}{w_2}{s}{\ntY \dist{\ntZ}} & \alignedpdaedge{r}{\ntX [p,\NTA \rest]}{\varepsilon}{w_1 \otimes w_2}{s}{\ntY [\qinit^1,\Start] \ntZ [q,\rest]} & \text{(\poprightrule)} \\
        \alignedpdaedge{p}{\NTA}{\varepsilon}{w_1}{q}{\NTB \NTC} &&& \alignedpdaedge{r}{\ntX [p,\NTA \rest]}{\varepsilon}{w_1}{r}{\ntX [q,\NTB \NTC \rest]} & \text{(\pushrule)}
    \end{array}\]
    \caption{PDA $\control$ PDA. States $\qinit^1$ and $\qinit^2$ are the initial states of $\pushdown_1$ and $\pushdown_2$, respectively, and similarly for $\qfinal^1$ and $\qfinal^2$.}
    \label{fig:pda-pda-nf}
    \end{subfigure}
    
    \caption{Normal forms of PDA $\control$ CFG, CFG $\control$ PDA, PDA $\control$ PDA resulting from normal forms of their controllers and controllees.}
    \label{fig:normal-forms}
\end{figure*}

\subsubsection{Two-level normal forms} \label{sec:app-two-level-nf}
\Cref{fig:normal-forms} shows how the normal forms of 
PDA $\control$ CFG, CFG $\control$ PDA and PDA $\control$ PDA are obtained from the normal forms of the controllees and the controllers.
For the \epsrule{} rules, as noted in \cref{sec:normal-forms}, in principle there could be rules with other left-hand sides, but such rules would never be used.

\section{Computing Stringsums}

\subsection{Pop Computations} \label{sec:app-pop-comp}

\begin{defin}
    Let $\pushdown$ be a controller WPDA with states $\states$, and stack alphabet $\stackalphabet$, and let $\grammar$ be a WLD-CFG with nonterminals $\nonterm$. 
    Let $\str \in \kleene{\alphabet}$ be a string of length $|\str|=n$. 
    A \defn{pop computation} of $\pushdown \control \grammar$ of type $\myitemlig{i}{X}{A}{l}{j}{Y}{k}{p}{q}$, where $0 \leq i \leq j \leq k \leq l \leq n$, $p,q \in \states$, $\ntX, \ntY \in \nonterm$, and $\NTA \in \stackalphabet$, is a pair of partial derivations $(\derivation_1, \derivation_2)$, where
    \begin{align*}
    \derivation_1 &\in (\derives{\purple{\ntX[p,\NTA\stopper]}}{\purple{\str\slice{i}{j}} 
    \teal{\ntY[q,\varepsilon] }
    \purple{\nestedstack_2}}{*}) \\
    \derivation_2 &\in (\derives{\purple{\nestedstack_2}}{\purple{\str\slice{k}{l}}}{*}).
    \end{align*}
    
    A \defn{pop computation} of type $\myitemclig{i}{X}{j}{A}{p}$ is a partial derivation $\derives{\purple{\ntX[p,\NTA]}}{\purple{\str \slice{i}{j}}}{*}$.
\end{defin}

\begin{defin}
    Let $\grammar$ be a controller WCFG with nonterminals $\nonterm$, and let $\pushdown$ be a WLD-PDA with states $\states$ and stack alphabet $\stackalphabet$. 
    Let $\str \in \kleene{\alphabet}$ be an input string of length $|\str|=n$. 
    A \defn{pop computation} of $\grammar \control \pushdown$ of type $\myitem{i,p}{X}{A}{l,s}{j,q}{Y}{k,r}$, where $0 \leq i \leq j \leq k \leq l \leq n$, $p,q,r,s \in \states$, $\ntX, \ntY \in \stackalphabet$, and $\NTA \in \nonterm$, is a pair of runs $(\arun_1, \arun_2)$, where
    \begin{align*}
    \arun_1 &\in (\derives{\pdaconfig{\purple{p}}{\purple{\ntX[\NTA\stopper] \stopper}}}{\pdaconfig{\purple{q}}{
    \teal{\ntY[\varepsilon] }
    \purple{\nestedstack_2 \stopper}}}{*}) \text{\ and scans \purple{$\str\slice{i}{j}$}} \\
    \arun_2 &\in (\derives{\pdaconfig{\purple{r}}{\purple{\nestedstack_2 \stopper}}}{\pdaconfig{\purple{s}}{
    \purple{\varepsilon}
    }}{*}) \text{\ and scans \purple{$\str\slice{k}{l}$}}.
    \end{align*}

    A \defn{pop computation} of type $\myitemc{i,p}{X}{j,q}{A}$ is a run $\derives{\pdaconfig{\purple{p}}{\purple{\ntX[\NTA]\stopper}}}{\pdaconfig{\purple{q}}{
    \purple{\varepsilon}
    }}{*}$ scanning \purple{$\str \slice{i}{j}$}.
\end{defin}

\begin{defin}
    Let $\pushdown_1$ be a controller WPDA and $\pushdown_2$ be a WLD-PDA, with states and stack alphabets $\states_1$ and $\stackalphabet_1$, and $\states_2$ and $\stackalphabet_2$, respectively. 
    Let $\str \in \kleene{\alphabet}$ be an input string of length $|\str|=n$. 
    A \defn{pop computation} of $\pushdown_1 \control \pushdown_2$ of type $\myitemlig{i,p}{X}{A}{l,s}{j,q}{Y}{k,r}{e}{f}$, where $0 \leq i \leq j \leq k \leq l \leq n$, $p,q,r,s \in \states_2$, $e,f \in \states_1$,\david{Unusual choice of state names but I can't think of anything better. $q_1, r_1 \in Q_1$ and $p_2, q_2, r_2, s_2\in Q_2$?} $\ntX, \ntY \in \stackalphabet_2$, and $\NTA \in \stackalphabet_1$, is a pair of runs $(\arun_1, \arun_2)$, where
    \begin{align*}
    \arun_1 &\in (\derives{\pdaconfig{\purple{p}}{\purple{\ntX[e,\NTA\stopper] \stopper}}}{\pdaconfig{\purple{q}}{
    \teal{\ntY[f,\varepsilon]} 
    \purple{\nestedstack_2 \stopper}}}{*}) \text{\ and scans \purple{$\str\slice{i}{j}$}} \\
    \arun_2 &\in (\derives{\pdaconfig{\purple{r}}{\purple{\nestedstack_2 \stopper}}}{\pdaconfig{\purple{s}}{
    \purple{\varepsilon}
    }}{*}) \text{\ and scans \purple{$\str\slice{k}{l}$}}.
    \end{align*}
        
    A \defn{pop computation} of type $\myitemclig{i,p}{X}{j,q}{A}{e}$ is a run $\derives{\pdaconfig{\purple{p}}{\purple{\ntX[e,\NTA]\stopper}}}{\pdaconfig{\purple{q}}{
    \purple{\varepsilon}
    }}{*}$ scanning \purple{$\str \slice{i}{j}$}.
\end{defin}



\subsection{Deduction Systems} \label{sec:app-stringsums}

\Cref{fig:stringsums-allsums-cfg-pda,fig:stringsums-allsums-pda-cfg,fig:stringsums-allsums-pda-pda}, in the columns labeled ``stringsum,'' show deduction systems for computing stringsums of PDA $\control$ CFG, CFG $\control$ PDA, and PDA $\control$ PDA. The controller and controllee PDAs are assumed to be single-state.

The goal items for PDA $\control$ CFG, CFG $\control$ PDA, and PDA $\control$ PDA are, in order, $\myitemclig{0}{S}{n}{S}{q_\iota}$, $\myitemc{0, q_\iota}{S}{n, q_f}{S}$, and $\myitemclig{0, q_\iota^2}{S}{n, q_f}{S}{q_\iota^1}$, where $n$ is the length of the input string.

\begin{figure*}
    \small
    \[\begin{array}{l@{\;}c@{\;}c@{\;}r@{}l}
    \text{name} & \text{stringsum} & \text{allsum} & \multicolumn{2}{c}{\text{side condition}} \\
    \midrule
    \text{(\epsrule)} &\infer{\strut}{\myitemclig{0}{S}{n}{S}{q_\iota}}\mathrlap{\quad n = 0} &\infer{}{\myitemclig{}{S}{}{S}{q_\iota}} & \alignedwproduction{\start[\qinit,\Start]}{\varepsilon}{w} \\[8ex]
    \text{(\terminalrule)} &\infer{}{\myitemclig{i-1}{X}{i}{A}{p}} \mathrlap{\quad \str_i = a} &\infer{\strut}{\myitemclig{}{X}{}{A}{p}} & \alignedwproduction{\ntX [p, \NTA]}{\syma}{w} \\[8ex]
    \text{(\popleftrule)} &\infer{\myitemclig{j}{Z}{k}{S}{q_\iota}}{\myitemlig{i}{X}{A}{k}{i}{Y}{j}{p}{q}} &\infer{\myitemclig{}{Z}{}{S}{q_\iota}}{\myitemlig{}{X}{A}{}{}{Y}{}{p}{q}} & \alignedwproduction{\ntX [p, \NTA \rest]}{\ntY [q, \rest] \ntZ [q_\iota, \Start]}{w}  \\[12ex]
    \text{(\poprightrule)} &\infer{\myitemclig{i}{Y}{j}{S}{q_\iota}}{\myitemlig{i}{X}{A}{k}{i}{Z}{j}{p}{q}} &\infer{\myitemclig{}{Y}{}{S}{q_\iota}}{\myitemlig{}{X}{A}{}{}{Z}{}{p}{q}} & \alignedwproduction{\ntX [p, \NTA \rest]}{\ntY [q_\iota, \Start] \ntZ [q, \rest]}{w} \\[12ex]
    \text{(\pushrule-1)} &\infer{\myitemlig{i}{X}{B}{o}{j}{Y}{m}{q}{r} \quad \myitemligtwo{j}{Y}{C}{m}{k}{Z}{l}{r}{s}}{\myitemligthree{i}{X}{A}{o}{k}{Z}{l}{p}{s}} &\infer{\myitemlig{}{X}{B}{}{}{Y}{}{q}{r} \quad \myitemligtwo{}{Y}{C}{}{}{Z}{}{r}{s}}{\myitemligthree{}{X}{A}{}{}{Z}{}{p}{s}} & \alignedwproduction{\ntX [p, \NTA \rest]}{\ntX [q, \NTB \NTC \rest]}{w} \\[12ex]
    \text{(\pushrule-2)} &\infer{\myitemlig{i}{X}{B}{l}{j}{Y}{k}{q}{r} \quad {\myitemcligtwo{j}{Y}{k}{C}{r}}}{\myitemclig{i}{X}{l}{A}{p}} &\infer{\myitemlig{}{X}{B}{}{}{Y}{}{q}{r} \quad {\myitemcligtwo{}{Y}{}{C}{r}}}{\myitemclig{}{X}{}{A}{p}} & \alignedwproduction{\ntX [p, \NTA \rest]}{\ntX [q, \NTB \NTC \rest]}{w}
    \end{array}\]
    \caption{Deductive systems for computing stringsums and allsums of PDA $\control$ CFG. State $\qinit$ is the initial state of the controller.}
    \label{fig:stringsums-allsums-pda-cfg}
\end{figure*}

\begin{figure*}
    \small
    \[\begin{array}{l@{\;}c@{\;}c@{\;}r@{}l}
    \text{name} & \text{stringsum} & \text{allsum} & \multicolumn{2}{c}{\text{side condition}} \\
    \midrule
    \text{(\epsrule)} &\infer{\strut}{\myitemc{0, q_\iota}{S}{n, q_f}{S}} \mathrlap{\quad n = 0} &\infer{}{\myitemc{q_\iota}{S}{q_f}{S}} & \alignedpdaedge{\qinit}{\start[\Start]}{\varepsilon}{w}{\qfinal}{\varepsilon} \\[8ex]
    \text{(\terminalrule)} &\infer{}{\myitemc{i-1, p}{X}{i,q}{A}} \mathrlap{\quad \str_i = a} &\infer{\strut}{\myitemc{p}{X}{q}{A}} & \alignedpdaedge{p}{\start[\NTA]}{\syma}{w}{q}{\varepsilon} \\[8ex]
    \text{(\popleftrule)} &\infer{\myitemc{j,r}{Z}{k,s}{S}}{\myitem{i,p}{X}{A}{k,s}{i,q}{Y}{j,r}} &\infer{\myitemc{r}{Z}{s}{S}}{\myitem{p}{X}{A}{s}{q}{Y}{r}} & \alignedpdaedge{p}{\ntX[\NTA \rest]}{\varepsilon}{w}{q}{\ntY[\rest] \ntZ [\Start]} \\[12ex]
    \text{(\poprightrule)} &\infer{\myitemc{i,q}{Y}{j,r}{S}}{\myitem{i,p}{X}{A}{k,s}{i,q}{Z}{j,r}} &\infer{\myitemc{q}{Y}{r}{S}}{\myitem{p}{X}{A}{s}{q}{Z}{r}} & \alignedpdaedge{p}{\ntX [\NTA \rest]}{\varepsilon}{w}{q}{\ntY [\Start] \ntZ [\rest]} \\[12ex]
    \text{(\pushrule-1)} &\infer{\myitem{i,q}{X}{B}{o,v}{j,r}{Y}{m,u} \quad \myitemtwo{j,r}{Y}{C}{m,u}{k,s}{Z}{l,t}}{\myitemthree{i,p}{X}{A}{o,v}{k,s}{Z}{l,t}} &\infer{\myitem{q}{X}{B}{v}{r}{Y}{u} \quad \myitemtwo{r}{Y}{C}{u}{s}{Z}{t}}{\myitemthree{p}{X}{A}{v}{s}{Z}{t}} & \alignedpdaedge{p}{\ntX [\NTA \rest]}{\varepsilon}{w}{q}{\ntX [\NTB \NTC \rest]} \\[12ex]
    \text{(\pushrule-2)} &\infer{\myitem{i,q}{X}{B}{l,t}{j,r}{Y}{k,s} \quad {\myitemctwo{j,r}{Y}{k,s}{C}}}{\myitemc{i,p}{X}{l,t}{A}} &\infer{\myitem{q}{X}{B}{t}{r}{Y}{s} \quad {\myitemctwo{r}{Y}{s}{C}}}{\myitemc{p}{X}{t}{A}} & \alignedpdaedge{p}{\ntX [\NTA \rest]}{\varepsilon}{w}{q}{\ntX [\NTB \NTC \rest]} 
    \end{array}\]
    \caption{Deductive systems for computing stringsums and allsums of CFG $\control$ PDA. States $\qinit$ and $\qfinal$ are the controllee's initial and final states, respectively.}
    \label{fig:stringsums-allsums-cfg-pda}
\end{figure*}

\begin{figure*}
\centering
\resizebox{\linewidth}{!}{%
\small
    $\begin{array}{@{}l@{\;}c@{\;}c@{\;}r@{}l@{}}
    \text{name} & \text{stringsum} & \text{allsum} & \multicolumn{2}{c}{\text{side condition}} \\
    \midrule
    \text{(\epsrule)} &\infer{\strut}{\myitemclig{0, q_\iota^2}{S}{n, q_f}{S}{q_\iota^1}}\mathrlap{\quad n = 0} &\infer{}{\myitemclig{q_\iota^2}{S}{ q_f}{S}{q_\iota^1}} & \alignedpdaedge{\qinit^2}{\start[\qinit^1, \Start]}{\varepsilon}{w}{\qfinal}{\varepsilon} \\[8ex]
    \text{(\terminalrule)} &\infer{}{\myitemclig{i-1,p}{X}{i,q}{A}{e}}  \mathrlap{\quad \str_i = a} &\infer{}{\myitemclig{p}{X}{q}{A}{e}} & \alignedpdaedge{p}{\ntX [e, \NTA]}{\syma}{w}{q}{\varepsilon} \\[8ex]
    \text{(\popleftrule)} &\infer{\myitemclig{j,r}{Z}{k,s}{S}{q_\iota^1}}{\myitemlig{i,p}{X}{A}{k,s}{i,q}{Y}{j,r}{e}{f}} &\infer{\myitemclig{r}{Z}{s}{S}{q_\iota^1}}{\myitemlig{p}{X}{A}{s}{q}{Y}{r}{e}{f}} & \alignedpdaedge{p}{\ntX [e, \NTA \rest]}{\varepsilon}{w}{q}{\ntY [f, \rest] \ntZ [q, \Start]}  \\[12ex]
    \text{(\poprightrule)} &\infer{\myitemclig{i,q}{Y}{j,r}{S}{q_\iota^1}}{\myitemlig{i,p}{X}{A}{k,s}{i,q}{Z}{j,r}{e}{f}} &\infer{\myitemclig{q}{Y}{r}{S}{q_\iota^1}}{\myitemlig{p}{X}{A}{s}{q}{Z}{r}{e}{f}} & \alignedpdaedge{p}{\ntX [e, \NTA \rest]}{\varepsilon}{w}{q}{\ntY [f, \Start] \ntZ [q, \rest]} \\[12ex]
    \text{(\pushrule-1)} &\infer{\myitemlig{i,q}{X}{B}{o,v}{j,r}{Y}{m,u}{f}{g} \quad \myitemligtwo{j,r}{Y}{C}{m,u}{k,s}{Z}{l,t}{g}{h}}{\myitemligthree{i,p}{X}{A}{o,v}{k,s}{Z}{l,t}{e}{h}} &\infer{\myitemlig{q}{X}{B}{v}{r}{Y}{u}{f}{g} \quad \myitemligtwo{r}{Y}{C}{u}{s}{Z}{t}{g}{h}}{\myitemligthree{p}{X}{A}{v}{s}{Z}{t}{e}{h}} & \alignedpdaedge{p}{\ntX [e, \NTA \rest]}{\varepsilon}{w}{q}{\ntX [f, \NTB \NTC \rest]} \\[12ex]
    \text{(\pushrule-2)} &\infer{\myitemlig{i,q}{X}{B}{l,t}{j,r}{Y}{k,s}{f}{g} \quad {\myitemcligtwo{j,r}{Y}{k,s}{C}{g}}}{\myitemclig{i,p}{X}{l,t}{A}{e}} &\infer{\myitemlig{q}{X}{B}{t}{r}{Y}{s}{f}{g} \quad {\myitemcligtwo{r}{Y}{s}{C}{g}}}{\myitemclig{p}{X}{t}{A}{e}} & \alignedpdaedge{p}{\ntX [e, \NTA \rest]}{\varepsilon}{w}{q}{\ntX [f, \NTB \NTC \rest]}
    \end{array}$
}%
\caption{Deductive systems for computing stringsums and allsums of PDA $\control$ PDA. States $\qinit^1$ and $\qinit^2$ are the initial states of $\pushdown_1$ and $\pushdown_2$, respectively. State $\qfinal$ is the final state of $\pushdown_2$.}
\label{fig:stringsums-allsums-pda-pda}
\end{figure*}

\section{Computing Allsums} \label{sec:app-allsums}

\begin{defin}
    Let $\pushdown$ be a controller WPDA with states $\states$ and stack alphabet $\stackalphabet$, and let $\grammar$ be a WLD-CFG with nonterminal alphabet $\nonterm$.
    A \defn{pop computation} of $\pushdown \control \grammar$ of type $\myitemlig{}{X}{A}{}{}{Y}{}{p}{q}$, where $\ntX, \ntY \in \nonterm$, $p,q \in \states$, and $\NTA \in \stackalphabet$, is a pair of partial derivations $(\derivation_1, \derivation_2)$, where 
    \begin{align*}
        \derivation_1 &\in (\derives{\purple{\ntX [p,\NTA \stopper]}}{\purple{\str_1} \teal{\ntY [q,\varepsilon\stopper]} \purple{\nestedstack_2}}{*}) \\
        \derivation_2 &\in (\derives{\purple{\nestedstack_2}}{\purple{\str_2}}{*})
    \end{align*}
    and $\str_1, \str_2 \in \kleene{\alphabet}$, $\nestedstack_2 \in \kleene{\nonterm [\kleene{\stackalphabet}]}$.
    The weight of this pop computation is $\weight (\derivation_1, \derivation_2) = \weight (\derivation_1) \otimes \weight(\derivation_2)$.
    
    A \defn{pop computation} of type $\myitemclig{}{X}{}{A}{p}$ is a partial derivation $\derives{\ntX[p,\NTA]}{\str}{*}$, where $\str \in \kleene{\alphabet}$. 
\end{defin}

\begin{defin}
    Let $\grammar$ be a controller CFG with nonterminal alphabet $\nonterm$, and let $\pushdown$ be a WLD-PDA with states $\states$ and stack alphabet $\stackalphabet$.
    A \defn{pop computation} of $\grammar \control \pushdown$ of type $\myitem{p}{X}{A}{s}{q}{Y}{r}$, where $\ntX, \ntY \in \stackalphabet$, $p,q,r,s \in \states$, and $\NTA \in \nonterm$, is a pair of runs $(\arun_1, \arun_2)$, where
    \begin{align*}
        \arun_1 &\in (\derives{\pdaconfig{\purple{p}}{\purple{\ntX[\NTA\stopper]\stopper}}}{\pdaconfig{\purple{q}}{
        \teal{\ntY[\varepsilon]}
        \purple{\nestedstack_2} \stopper}}{*}) \text{\ and scans\ \purple{$\str_1$}}\\
        \arun_2 &\in (\derives{\pdaconfig{\purple{r}}{\purple{\nestedstack_2} \stopper}}{\pdaconfig{\purple{s}}{\purple{\varepsilon}\stopper}}{*}) \text{\ and scans\ \purple{$\str_2$}}
    \end{align*}
    and $\str_1, \str_2 \in \kleene{\alphabet}$, $\nestedstack_2 \in \kleene{\stackalphabet [\kleene{\nonterm}]}$.
    The weight of this pop computation is $\weight (\arun_1, \arun_2) = \weight(\arun_1) \otimes \weight(\arun_2)$.
    
    A \defn{pop computation} of type $\myitemc{p}{X}{q}{A}$ is a run $\derives{\pdaconfig{\purple{p}}{\purple{\ntX[\NTA]\stopper}}}{\pdaconfig{\purple{q}}{\purple{\varepsilon}\stopper}}{*}$ scanning some $\purple{\str} \in \kleene{\alphabet}$. 
\end{defin}

\begin{defin}
    Let $\pushdown_1$ be a controller WPDA and $\pushdown_2$ a WLD-PDA.
    Additionally, let $\states_1$ and $\stackalphabet_1$, and $\states_2$ and $\stackalphabet_2$ be the states and stack alphabets of $\pushdown_1$ and $\pushdown_2$, respectively.
    A \defn{pop computation} of $\pushdown_1 \control \pushdown_2$ of type $\myitemlig{p}{X}{A}{s}{q}{Y}{r}{e}{f}$, where $p,q,r,s \in \states_2$, $e,f \in \states_1$, $\ntX, \ntY \in \stackalphabet_2$ and $\NTA \in \stackalphabet_1$, is is a pair of runs $(\arun_1, \arun_2)$, where
    \begin{align*}
        \arun_1 &\in (\derives{\pdaconfig{\purple{p}}{\purple{\ntX[e,\NTA\stopper]}\stopper}}{\pdaconfig{\purple{q}}{\teal{\ntY[f,\varepsilon\stopper]}\purple{\nestedstack_2} \stopper}}{*}) \text{\ and scans\ \purple{$\str_1$}}\\
        \arun_2 &\in (\derives{\pdaconfig{\purple{r}}{\purple{\nestedstack_2} \stopper}}{\pdaconfig{\purple{s}}{\purple{\varepsilon}\stopper}}{*}) \text{\ and scans\ \purple{$\str_2$}}
    \end{align*}
    and $\str_1, \str_2 \in \kleene{\alphabet}$, $\nestedstack_2 \in \kleene{\stackalphabet_2 [\kleene{\stackalphabet_1}]}$.
    The weight of this pop computation is $\weight (\arun_1, \arun_2) = \weight(\arun_1) \otimes \weight(\arun_2)$. 
    
    A \defn{pop computation} of type $\myitemclig{p}{X}{q}{A}{e}$ is a run $\derives{\pdaconfig{\purple{p}}{\purple{\ntX[e,\NTA]}\stopper}}{\pdaconfig{\purple{q}}{\purple{\varepsilon}\stopper}}{*}$ scanning some $\purple{\str} \in \kleene{\alphabet}$. 
\end{defin}

\Cref{fig:stringsums-allsums-cfg-pda,fig:stringsums-allsums-pda-cfg,fig:stringsums-allsums-pda-pda}, in the columns labeled ``allsum'', show deductive systems for computing allsums of PDA $\control$ CFG, CFG $\control$ PDA, and PDA $\control$ PDA. 
As in \cref{sec:allsums}, the value of the goal item can be computed by fixed-point iteration or the semiring generalization of Newton's method.

\section{Complexity Analysis of Conversions} \label{sec:app-complexity-analysis}

Our stringsum and allsum algorithms, although designed for the two-level formalisms, can also be used for TAG, LIG, PAA, and EPDA. 
However, we must apply a series of conversions in order to do this. 
For instance, we must apply the following transformations to a WLIG with sets of nonterminals $\nonterm$ and stack symbols $\stackalphabet$ before feeding it into the stringsum algorithm:

\begin{enumerate}
    \item The controller WPDA and the controllee WCFG have to be extracted from the WLIG, resulting in a WCFG with $\mathcal{O}(|\nonterm|)$ nonterminals and a WPDA with $\mathcal{O}(|\stackalphabet|)$ stack symbols.
    \item The controllee WCFG must be converted into Chomsky normal form.
    If the productions have at most $k$ symbols on the right-hand side, then the output WCFG has $\mathcal{O}(k \times |\rules|)$ nonterminals.
    Similarly, the controller WPDA must be converted into the normal form. 
    If the WPDA has transitions that push at most $k$ symbols, the resulting WPDA has $\mathcal{O}(k \times |\trans|)$ states.
    \item The rules of the controller and of the controllee need to be merged back into a set of WLIG rules. This construction outputs a WLIG with $\mathcal{O}(|\nonterm| \times |\states|)$ nonterminals and $\mathcal{O}(|\stackalphabet|)$ stack symbols, where $\nonterm$ is the set of nonterminals of the input controllee, and $\states$ and $\stackalphabet$ are the sets of states and stack symbols of the input controller.
\end{enumerate}

\end{document}